\renewenvironment{abstract}%
{%
  \vskip 0.075in%
  \centerline%
  {\large\bf Abstract}%
  \vspace{0.5ex}%
  \begin{quote}%
}
{
  \par%
  \end{quote}%
  \vskip 1ex%
}
\newtheorem{theorem}{Theorem}
\newtheorem{proposition}{Proposition}
\renewcommand{\l}[0]{l} 
\newcommand{\m}[0]{m} 
\newcommand{\f}[0]{k} 
\newcommand{\one}[1]{g} 
\newcommand{\T}[0]{T} 
\newcommand{\p}[0]{B} 
\newcommand{\pred}[0]{g} 
\newcommand{\score}{\mathsf{cost}}
\newcommand{\ascore}{\mathsf{cost}^a}
\newcommand{\imp}{\mathsf{imp}}
\newcommand{\aimp}{\mathsf{imp}^a}
\newcommand{\prob}{\mathsf{pr}}
\newcommand{\aprob}{\mathsf{pr}^a}
\newcommand{\afilter}{\p(\T)_{\phi}}
\newcommand{\afilterneg}{\p(\T)_{\neg\phi}}
\newcommand{\size}{\mathsf{size}}
\newcommand{\bestsplit}{\mathsf{split}}
\newcommand{\abestsplit}{\mathsf{split}^a}
\newcommand{\toy}{\T^\mathsf{ex}}
\newcommand{\NAME}{Antidote-P}
    \pgfplotsset{
        cycle list/.define={my marks}{
            every mark/.append style={dotted,fill=\pgfkeysvalueof{/pgfplots/mark list fill}},mark=none\\
            every mark/.append style={solid,fill=\pgfkeysvalueof{/pgfplots/mark list fill}},mark=square*\\
            every mark/.append style={solid,fill=\pgfkeysvalueof{/pgfplots/mark list fill}},mark=none\\
            every mark/.append style={solid,fill=\pgfkeysvalueof{/pgfplots/mark list fill}},mark=diamond*\\
        },
    }
\DeclareMathOperator*{\argmax}{argmax}
\DeclareMathOperator*{\argmin}{argmin}
\crefname{section}{\S}{section}
\title{Certifying Robustness to Programmable Data Bias in Decision Trees}
\author{%
  Anna P. Meyer, Aws Albarghouthi\thanks{~Author's name in native alphabet: \novocalize\RL{'aws albr.gU_ty}}, and Loris D'Antoni\\
  Department of Computer Sciences\\
  University of Wisconsin--Madison\\
  Madison, WI 53706 \\
  \texttt{\{annameyer}, \texttt{aws}, \texttt{loris\}@cs.wisc.edu} \\
  %
}
\renewcommand{\paragraph}[1]{\textbf{#1.}}
\newtheorem{example}{Example}[section]
\renewcommand{\leq}{\leqslant}
\pgfplotsset{compat=1.17}
\begin{document}

\maketitle

\begin{abstract}
Datasets can be biased due to societal inequities, human biases, under-representation of minorities, etc.
Our goal is to \emph{certify}
that models produced by a learning algorithm are \emph{pointwise-robust} to potential dataset biases.
This is a challenging problem: it entails learning models for a large, or even infinite, number of datasets, ensuring that they all produce the same prediction.
We focus on decision-tree learning due to the interpretable nature of the models.
Our approach allows programmatically specifying \emph{bias models} across a variety of dimensions (e.g., missing data for minorities), composing types of bias, and targeting bias towards a specific group.
To certify robustness, we use a novel symbolic technique to evaluate a decision-tree learner on a large, or infinite, number of datasets, certifying that each and every dataset produces the same prediction for a specific test point.
We evaluate our approach on datasets that are commonly used in the fairness literature,
and demonstrate our approach's viability on a range of bias models.
\end{abstract}


\section{Introduction}
The proliferation of machine-learning algorithms
has raised alarming questions about fairness in automated decision-making~\cite{barocas-hardt-narayanan}.
In this paper, we focus our attention on bias in training data.
Data can be biased due to societal inequities, human biases, under-representation of minorities,
malicious data \emph{poisoning}, etc.
For instance, historical data can contain human biases, e.g., certain individuals' loan requests get rejected, although (if discrimination were not present) they should have been approved, or
  women in certain departments are consistently given lower performance scores by managers.
 
 Given biased training data, 
 we are often unable to de-bias it because we do not know which samples are affected.
 This paper asks,
 \emph{can we certify (prove) that our predictions are robust under a given form and degree of bias in the training data?} We aim to answer this question without having to show
 which data are biased (i.e., poisoned).
 %
 Techniques for certifying poisoning robustness 
  (\emph{i}) focus on specific poisoning forms, e.g., label-flipping~\cite{rosenfeld-randomized}, 
 or (\emph{ii}) perform certification using defenses that create
 complex, uninterpretable classifiers, e.g., due to randomization or ensembling~\cite{jia2021intrinsic,levine2020deep,rosenfeld-randomized}.
 To address limitation (\emph{i}), we present  \emph{programmable bias definitions} that model nuanced biases in practical domains.
 To address (\emph{ii}),
we target \emph{existing} decision-tree learners---%
 considered interpretable and desirable for
 sensitive decision-making~\cite{Rudin2019Why}---and exactly certify their robustness,
 i.e., provide proofs that the bias in the data will not affect the outcome of the trained
 model on a given point.%
 

 We begin by presenting a \emph{language} for programmatically defining \emph{bias models}.
 A bias model allows us to flexibly  specify what sort of bias we suspect to be in the data, e.g., up to $n\%$ of the women \emph{may have} wrongly received a negative job evaluation.
 Our bias-model language is generic, allowing us to \emph{compose} simpler bias models into more complex ones, e.g.,  up to $n\%$ of the women may have wrongly received a negative evaluation \emph{and} up to $m\%$ of Black men's records may have been completely missed.
The choice of bias model depends on the provenance of the data and the task.

After specifying a bias model, our goal is to certify \emph{pointwise robustness to data bias}: Given an input $x$, we want to ensure that no matter whether the training data is biased or not, the resulting model's prediction for $x$ remains the same.
Certifying pointwise robustness is challenging.
One can train a model for every perturbation (as per a bias model) of a dataset
and make sure they all agree.
But this is generally not feasible, because the set of possible perturbations can be large or infinite.
Recall the bias model where up to $n\%$ of women may have wrongly received a negative label.
For a dataset with \num{1000} women and $n=1\%$, there are more than $10^{23}$ possible perturbed datasets.

To perform bias-robustness certification on decision-tree learners,
we employ \emph{abstract interpretation}~\cite{cousot1977abstract}
to symbolically run the decision-tree-learning algorithm on a large or infinite set of datasets simultaneously, thus learning a \emph{set} of possible decision trees, represented compactly.
The crux of our approach is a technique that lifts operations of decision-tree learning 
to symbolically operate over a \emph{set of datasets} defined using our bias-model language.
As a starting point, we build upon Drews et al.'s~\cite{drews-pldi} demonstration of poisoning-robustness certification for the simple bias model where an adversary may have added fake training data.
Our approach completely reworks and extends their technique to target the bias-robustness problem and handle complex bias models, including ones that may result in an infinite number of datasets.

\begin{figure}
\includegraphics[width=\textwidth]{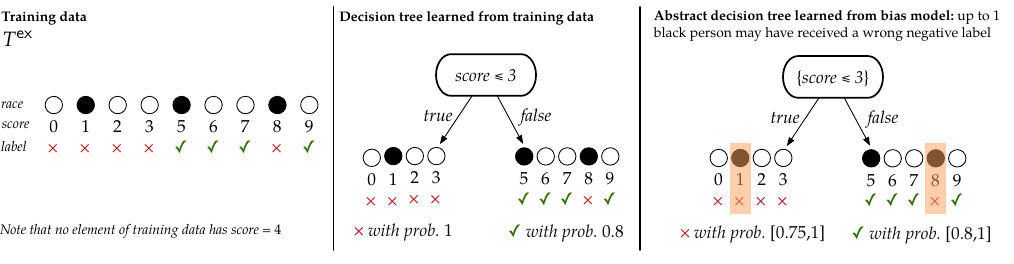}
\caption{A simple, hypothetical running example}\label{fig:running}
\end{figure}

\paragraph{Contributions}
We make three contributions:
(1) We formalize the bias-robustness-certification problem
and present a language to compositionally define bias models.
(2) We present a symbolic technique that performs decision-tree learning on
a set of datasets defined by a bias model, allowing us to perform certification.
(3) We evaluate our approach on a number of bias models and datasets from
the fairness literature. Our tool can certify pointwise robustness for a variety of bias models; we also show that some datasets have \emph{unequal robustness-certification rates} across demographics groups.

\paragraph{Running example}
Consider the example in \cref{fig:running};
our goal is to classify who should be hired based on a test score.
A standard decision-tree-learning algorithm  would choose the split (predicate) $\emph{score} \leq 3$, assuming we restrict tree depth to 1.\footnote{Other predicates, e.g., $\textit{score}\leq 4$, will yield the same split. We choose a single split for illustrative purposes here. (The implementation considers all possible splits that yield distinct partitions, so it would consider $\textit{score}\leq 3$ and $\textit{score}\leq 4$ as a single entity.)}
As shown in \cref{fig:running} (middle), the classification depends 
on the data split; e.g., on the right hand side, we see that a person with $\emph{score} > 3$
is accepted, because the proportion (``probability'') of the data with positive labels and $\emph{score} > 3$ is $4/5$ ($> 1/2$).

Now suppose that our bias model says that up to one Black person in the dataset
may have received a wrongful rejection.
Our goal is to show that even if that is the case, the prediction of a new test sample $x$ will not change.
As described above, training decision trees for all possible modified datasets is generally intractable.
Instead, we symbolically learn a set of possible decision trees compactly, as illustrated in \cref{fig:running} (right).
In this case the learning algorithm
always chooses $\emph{score} \leq 3$ (generally, our algorithm can capture all viable splits).
However, the proportion of labels on either branch varies.
For example, on the right, if the highlighted sample is wrongly labeled, then the ratio changes from $0.8$ to $1$.
To efficiently perform this calculation,
we lift the learning algorithm's operations to  \emph{interval arithmetic} and represent the probability as $[0.8,1]$.
Given a new test sample $x=\langle$\emph{race}=\textrm{Black}, \emph{score}=7$\rangle$, we follow the right branch and, since the interval is always larger than 0.5, 
we certify that the algorithm is robust for $x$. 
In general, however, due to the use of abstraction, our approach may fail to find tight intervals, and therefore be unable to certify robustness for all robust inputs.



 
\section{Related work}
\paragraph{Ties to poisoning}
Our  dataset bias language captures existing definitions
of \emph{data poisoning}, where an attacker is assumed to have maliciously modified training data.
Poisoning has been studied extensively. Most works have focused on attacks~\cite{biggio2012poisoning,cheng2018queryefficient, li_dt_instability, shafahi2018poison, turney-instability,xiao-label-flipping,zhang-label-flipping} or on training models that are empirically less vulnerable (defenses)~\cite{andriushchenko-stumps,chen-robust,dwyer_dt_instability,paudice-label-sanitation,rosenfeld-randomized,steinhardt-certified}. 
Our work differs along a number of dimensions:
(1) We allow programmatic, custom, composable definitions of bias models; notably, 
to our knowledge, no other work in this space allows for \emph{targeted} bias, i.e., restricting bias to a particular subgroup.
(2) Our work aims to certify and quantify robustness of an existing decision-tree algorithm, not to modify it (e.g., via bagging or randomized smoothing) to improve robustness~\cite{jia2021intrinsic,levine2020deep,rosenfeld-randomized}.

Statistical defenses show that a learner is robust \emph{with high probability}, often by modifying a base learner using, e.g., randomized smoothing~\cite{rosenfeld-randomized}, outlier detection~\cite{steinhardt-certified}, or bagging~\cite{jia2021certified,jia2021intrinsic}. 
Non-statistical certification (including abstract interpretation) 
has mainly focused on \emph{test-time} robustness,
where the vicinity (e.g., within an $\ell_p$ norm) of an input is proved to receive the same prediction~\cite{anderson-neuralnets-robustness, gehr-neuralNet-robustness, ranzato_abstract, singh-neuralNet-robustness, tornblom-abstract, albarghouthi-book}.
Test-time robustness is a simpler problem than our \emph{train-time} robustness problem because
it does not have to consider the mechanics of the learner on sets of datasets.
The only work we know of that certifies train-time robustness of decision trees is by Drews et al.~\cite{drews-pldi} and focuses on poisoning attacks where an adversary \emph{adds fake data}.
Our work makes a number of significant leaps beyond this work:
(1) We frame data bias as programmable, rather than fixed, 
to mimic real-world bias scenarios, an idea that has
gained traction in a variety of domains, e.g., NLP~\cite{yuhao}.
(2) We lift a decision-tree-learning algorithm to operate over sets of datasets
represented via our bias-model language.
(3) We investigate the bias-robustness problem through a fairness lens, particularly with an eye towards robustness rates for various demographic groups.

\paragraph{Ties to fairness}
The notion of \emph{individual fairness} specifies that \emph{similar individuals} should receive similar predictions~\cite{dwork-fairness}; by contrast, we certify that no individual should receive different predictions under models trained by \emph{similar datasets}.
Black and Fredrickson explore the problem of how individuals' predictions change under models trained by similar datasets, but their concept of similarity is limited to removing a \emph{single} data point~\cite{black-leaveoneout}.
Data bias, in particular, has received some attention in the fairness literature. Chen et al. suggest adding missing data as an effective approach to remedying bias in machine learning~\cite{chen-irene}, which is one operation that our bias language captures.
Mandal et al. build on the field of \emph{distributional robustness}~\cite{ben-tal-distribution,namkoong-distribution, shafieezadeh-distribution} to build classifiers that are empirically group-fair across a variety of nearby distributions~\cite{mandal-fairness-checking}. Our problem domain is related to distributional robustness because we certify robustness over a family of similar datasets; however, we define specific data-transformation operators to define similarity, and, unlike Mandal et al., we certify existing learners instead of building empirically robust models.

\paragraph{Ties to robust statistics}
There has been renewed interest in robust statistics for machine learning~\cite{diakonikolas2019recent,DBLP:journals/cacm/DiakonikolasKKL21}.
Much of the work concerns outlier detection
for various learning settings, e.g., estimating parameters of a Gaussian.
The distinctions are two-fold: (1) We deal with rich, nuanced bias models, as opposed to out-of-distribution samples, and
(2) we aim to certify that predictions are robust for a specific input,
a guarantee that cannot be made by robust-statistics-based techniques~\cite{diakonikolas2019sever}.

\section{Defining data bias programmatically}\label{sec:3}
We define 
the \emph{bias-robustness problem}
and a language for defining bias models \emph{programmatically}.

\newcommand{\calx}{\mathcal{X}}
\newcommand{\caly}{\mathcal{Y}}
\newcommand{\calh}{\mathcal{H}}
\newcommand{\h}{h}
\newcommand{\learn}{A}
\newcommand{\miss}{\textsc{miss}}
\newcommand{\missB}[2]{\miss_{#1}^{#2}}
\newcommand{\flip}{\textsc{flip}}
\newcommand{\flipB}[2]{\flip_{#1}^{#2}}
\newcommand{\fake}{\textsc{fake}}
\newcommand{\fakeB}[2]{\fake_{#1}^{#2}}
\newcommand{\infer}{\mathsf{infer}}
\newcommand{\ainfer}{\mathsf{infer}^a}

\paragraph{Bias models}
A dataset $\T \subseteq \calx \times \caly$
is a set of pairs of samples and labels,
where $\caly = \{0,\ldots,n-1\}$.
For a dataset $\T$, we will use $T_\calx$ to denote $\{x \mid (x,y) \in \T\}$.
A \emph{bias model} $\p$ is a function that takes a dataset 
and returns a \emph{set of datasets}.
We call $\p(\T) \subseteq 2^{\calx \times \caly}$ a \emph{bias set}.
We assume that $\T \in \p(\T)$.
Intuitively, $\p(\T)$ represents \emph{all datasets that could have existed had there been no bias.}


\paragraph{Pointwise data-bias robustness}
Assume we have a learning algorithm $\learn$
that, given a training dataset $\T$,
 deterministically returns a classifier $\h_\T$ from some hypothesis class $\calh$.
Fix a dataset $\T$ and bias model $\p$.
Given a sample $x \in \calx$, we say that $\learn$ is \emph{pointwise robust} (or robust for short) on $x$  iff 
\begin{equation}\label{eq:rob}
\text{there is a label $i$ such 
that for all $\T' \in \p(\T)$, we have $\h_{\T'}(x) = i$}
\end{equation}


\paragraph{Basic components of a bias model}
We begin with basic bias models.
%

 \emph{Missing data}: 
A common bias in datasets is missing data, which can occur via poor historical representation of a subgroup (e.g., women in CS-department admissions data), or from present-day biases or shortsightedness (e.g., a survey that bypasses low-income neighborhoods).
Using the parameter $\m$ as the maximum number of missing elements, we formally define:
\begin{equation*}
        \missB{\m}{}(\T) = \{\T'  \mid \T' {\supseteq} \T,\ |\T' {\setminus} \T| \leq \m\}
\end{equation*}
    
$\missB{\m}{}(\T)$ defines an \emph{infinite} number
datasets when the sample space is infinite (e.g., $\mathbb{R}$-valued).
  
\begin{example}
Using $\toy$ from \cref{fig:running}, 
$\miss_1(\toy)$
is the set of all datasets that are either $\toy$
or $\toy$~plus any new element $(x,y)$
with arbitrary race, score, and label.
\end{example}

 \emph{Label flipping}: 
 Historical data can contain human biases, e.g., in loan financing, certain individuals' loan requests get rejected due to discrimination.
 Or consider employee-performance data, where women in certain departments are consistently given lower scores by managers.
 We model such biases as label flipping, where labels of up to $\l$
 individuals in the dataset may be incorrect:
 \begin{equation*}
     \flipB{\l}{}(\T) = \{\T'  \mid |\T|{=}|\T'|,\ |\T {\setminus} \T'| \leq \l,\ \T_\calx = \T'_\calx \}  
 \end{equation*}

\begin{example}
Using $\toy$ from Fig. \ref{fig:running} and a bias model $\flip_1$,
we have
$\flip_1(\toy) = \{\toy_{0},\ldots,\toy_3,$ $ \toy_5,\ldots,\toy_9,\toy\}$,
where $\toy_i$ is $\toy$ with the label of the element with $\emph{score}=i$ \emph{changed}.
\end{example}

 \emph{Fake data}: 
    Our final bias model assumes the dataset may contain \emph{fake} data. One cause may be a malicious user who enters fraudulent data into a system (often referred to as \emph{poisoning}). Alternatively, this model can be thought of as the inverse of $\miss$, e.g., we over-collected data for men.
    $$\fakeB{\f}{}(\T) = \{\T'  \mid \T' {\subseteq} \T,\ |\T {\setminus} \T'| \leq \f\}$$

\begin{example}
Using $\toy$ from \cref{fig:running} and a bias model $\fakeB{1}{}$, we get $\fakeB{1}{}(\toy) = \{\toy_{\downarrow{0}},\ldots, \toy_{\downarrow{3}},$ $\toy_{\downarrow 5},\ldots, \toy_{\downarrow 9}, \toy \}$
where $\toy_{\downarrow i}$ is $\toy$ such that the element with $\emph{score}=i$ has been
\emph{removed}.
\end{example}


\paragraph{Targeted bias models}
Each bias model has a \emph{targeted} version that limits the bias to a specified group of data points. For example, consider the missing data transformation. If we suspect that data about women is missing from an HR database, we can limit the $\miss$ transformation to only add data points with $\emph{gender}=\textrm{female}$. 
Formally, we define a predicate $\pred:\calx \times \caly \to \mathbb{B}$,
where $\mathbb{B}=\{\emph{true},\ \emph{false}\}$.
%
%
$$\missB{\m}{\pred}(\T) = \{\T'  \mid \T' \supseteq \T,\ |\T' \setminus \T| \leq \m, \ \textrm{and }g(x,y)\textrm{ is true }\forall (x,y)\in (T'\setminus T)\}$$ 
Targeted versions of label-flipping and fake data can be defined in a similar way.

\begin{example}
In \cref{fig:running} (right),
we used bias model $\flip_1^g$, 
where $g$ targets Black people with negative labels.
This results in the bias set 
$\flip_1^g(\toy) = \{\toy_1, \toy_8, \toy\}$,
where $\toy_i$ is $\toy$ with the label of the element with $\mathit{score}=i$
\emph{changed} (recall that scores 1 and 8 belong to Black people in $\toy$).
\end{example}

\paragraph{Composite bias models}
We can compose basic components to generate a \emph{composite model}. Specifically, we define a composite model $\p$ as a finite set of arbitrary basic components, that is, 
\begin{align}
    \p = [ \missB{\m_1}{\pred_1},\ldots,\missB{\m_j}{\pred_j}, \flipB{\l_1}{\pred_{j+1}},\ldots,\flipB{\l_p}{\pred_{j+p}}, \fakeB{\f_1}{\pred_{j+p+1}},\ldots,\fakeB{\f_q}{\pred_{j+p+q}} ]
\end{align}
 $\p(\T)$ is generated from $\T$ by applying the basic components of $\p$ iteratively.
We must apply the constituent components in an optimal order, i.e.,
one that generates all datasets that can be created through applying the transformers in \emph{any} order. 
To do this, we apply components of the same type in any order
and apply transformers of different types in the order $\miss$, $\flip$, $\fake$ (see Appendix).

\begin{example}
Suppose $\p=[\missB{2}{\pred_1}, \fakeB{1}{}]$. Then $\p(\T)$ is the set of all datasets obtained by adding up to $2$ arbitrary data points that satisfy $\pred_1$ to $\T$, and then removing any up to $1$ data point.
\end{example}



\section{Certifying robustness for decision-tree learning}
\label{sec:abstract_certification_alg}
We begin with a simplified version of the CART algorithm \cite{cart}, which is our target for certification.

Given a dataset $\T$ and a Boolean function (predicate) $\phi: \calx \to \mathbb{B}$,
we define: 
$$\T_\phi = \{(x,y) \in \T \mid  \phi(x)\},$$ 
i.e., $\T_\phi$ is the set of elements satisfying $\phi$.
Analogously, $\T_{\neg\phi} = \{ (x,y)\in \T \mid  \neg \phi(x)\}$.

\begin{example}
Using $\phi\triangleq \textit{score}\leq 3$, we have $\toy_{\phi}=\{0,1,2,3\}$ and $\toy_{\neg\phi}=\{5,6,7,8,9\}$.
\end{example}

\paragraph{Learning algorithm}
To formalize our approach, it suffices to consider a simple algorithm that learns a decision stump, i.e., a tree of depth 1.
Therefore, the job of the algorithm is to choose a predicate (splitting rule) $\phi$
from a set of predicates $\Phi$ that optimally splits the dataset $\T$ into two datasets.
Formally, we define $\prob_i(T)$ as the proportion of $T$ with label $i$, i.e., 
\begin{equation}
    \prob_i(\T) =  |\{(x,i)\in \T\}|\ /\ {|T|}
\end{equation}
We use $\prob$ to calculate \emph{Gini impurity} ($\imp$), that is, $$\imp(\T) = \sum_{i=0}^{n-1} \prob_i(\T) (1-\prob_i(\T))$$ 

Using $\imp$, we assign each dataset-predicate pair a $\score$, where a low value indicates that $\phi$  splits $\T$ \emph{cleanly}, i.e., 
elements of $T_\phi$ (conversely, $T_{\neg\phi}$) have mostly the same label:
$$    \score(T,\phi) = |\T_\phi|\cdot \imp(\T_\phi) + |\T_{\neg \phi}|\cdot \imp(\T_{\neg\phi})
$$

Finally, we select the predicate that results in the lowest $\score$ (we break ties arbitrarily), as defined by the $\bestsplit$ operator:
$$
    \bestsplit(T) = \argmin_{\phi \in \Phi}\ \score(T,\phi)
$$

\begin{example}
For $\phi \triangleq \mathit{score}\leq 3$, $\score(\toy,\phi) = 4\times 0 + 5\times 0.32 = 1.6$.
\end{example}

\paragraph{Inference}
Given an optimal predicate $\phi$ and a new sample $x$ to classify,
we return the label with the highest proportion in the branch 
of the tree that $x$ takes.
Formally, 
$$
    \infer(T,\phi,x) = \argmax_i\ \prob_i(T_{\phi'}),
$$
where $\phi'$ is $\phi$
if $\phi(x) = \emph{true}$;
otherwise, $\phi'$ is $\neg\phi$.


\subsection{Certifying bias robustness with abstraction}\label{sec:abstract_prelims}
Given a dataset $\T$, bias model $\p$,
and sample $x$, our goal is to prove robustness (\cref{eq:rob}): no matter which dataset in $\p(\T)$
was used to learn a decision tree, the predicted label of $x$ is the same. 
Formally, 
\begin{equation}\label{eq:drob}
\text{there is a label $i$ s.t. for all }  \T' \in \p(\T),
\infer(\T',\phi',x) = i,\  \text{ where } \phi' = \bestsplit(\T')
\end{equation}
The na\"ive way to prove this is to learn a decision tree using each dataset in $\p(\T)$ and compare the results.
This approach is intractable or impossible, as $|\p(\T)|$ may be combinatorially large
or infinite.

Instead, we \emph{abstractly evaluate} the decision-tree-learning algorithm on the entire bias set $\p(\T)$ in a symbolic fashion, without having to enumerate all datasets.
Specifically, for each operator in the decision-tree-learning algorithm,
we define an abstract analogue, called an \emph{abstract transformer}~\cite{cousot1977abstract}, that operates over \emph{sets of training sets} symbolically. 
An abstract transformer is an approximation of
the original operator, in that it \emph{over-approximates}
the set of possible outputs on the set $\p(\T)$.

\paragraph{Sound abstract transformers}
Consider the $\prob_i$ operator, which takes a dataset
and returns a real number.
We define an abstract transformer $\aprob_i(\p(\T))$
that takes a set of datasets (defined as a bias set)
and returns an interval, i.e., a \emph{subset} of $\mathbb{R}$.
The resulting interval defines a range of possible values for the probability of class $i$.
E.g., an interval $\aprob_i(\p(\T))$ may be $[0.1,0.3]$,
 meaning that the proportion of $i$-labeled elements in datasets in $\p(\T)$ is
between $0.1$ and $0.3$, inclusive.

Given intervals computed by $\aprob_i$, downstream operators will be lifted into interval arithmetic, which is fairly standard. E.g., for $a,b,c,d\in\mathbb{R}$, $[a,b]+[c,d]=[a+c,b+d]$.
It will be clear from context when we are applying arithmetic operators to intervals.
For a sequence $\{x_i\}_i$ of intervals, $\argmax_i x_i$ returns a \emph{set of possible indices}, as intervals may overlap and there may be no unique maximum.

\begin{example}
Let $I = \{[1,2], [4,8], [6,7], [4,5]\}$. Then $\max(I) = \{[4,8], [6,7]\}$ because 6 is the \emph{greatest lower bound} of $I$, and $[4,8]$ and $[6,7]$ are the only intervals in $I$ that contain 6. 
\end{example}

For the entire certification procedure to be correct,
$\aprob$ and all other abstract transformers 
must be \emph{sound}. 
That is, they should over-approximate the set of 
possible outputs.
Formally,
$\aprob_i$ is a sound approximation of $\prob_i$ iff 
for all $i$ and all $\T' \in \p(\T)$, we have $\prob_i(\T') \in \aprob_i(\p(\T))$.

\paragraph{Certification process}
To perform certification,
we use an abstract transformer
$\abestsplit(\p(\T))$ to compute a \emph{set} of best predicates $\Phi^a$
for $\p(\T)$.
The reason $\abestsplit$ returns 
a set of predicates is because its input is a set of datasets that may result in different optimal splits.
Then, we use an abstract transformer
$\ainfer(\p(\T), \Phi^a, x)$ to compute a \emph{set} of 
labels for $x$.
If $\ainfer$ returns a singleton set,
then we have proven pointwise robustness for $x$ (\cref{eq:drob});
otherwise, we have an inconclusive result---we cannot falsify robustness because abstract transformers are over-approximate.

\subsection{Abstract transformers for \texorpdfstring{$\prob$}{pr}}\label{sec:abstraction}
We focus on the most challenging transformer, $\aprob$;
in \cref{sec:abstract_alg}, we show the rest of the transformers.

\paragraph{Abstracting missing data}
We begin by describing $\aprob$ for missing data bias,
 $\p = \missB{\m}{}$. From now on, we use $c_i$ to denote the number of samples $(x,y)\in T$ with $y=i$. We define $\aprob_i$ by considering how we can add data to \emph{minimize} the fraction of $i$'s in $T$ for the lower bound of the interval, and \emph{maximize} the fraction of $i$'s in $T$ for the upper bound. To minimize the fraction of $i$'s, we add $\m$ elements with label $j\neq i$; to maximize the fraction of $i$'s, we add $\m$ elements with label $i$. 
\begin{equation}\label{eq:cprob-missing}
    \aprob_i(\missB{\m}{}(\T)) = \left[ \frac{c_i}{|\T|+\m}\ ,\ \frac{c_i+m}{|\T|+\m}\right ]
\end{equation}

\begin{example}
Given $\p= \missB{1}{}$, we have $\aprob_{\color{ForestGreen}\checkmark}(\p(\toy)) = \left[\frac{4}{10},\frac{6}{10}\right]$.
\end{example}

\paragraph{Abstracting label-flipping}
Next, we define $\aprob_i$ for label-flipping bias,
where $\p = \flipB{\l}{}$.
Intuitively, we can minimize the proportion of $i$'s by flipping $\l$ labels from $i$ to $j\neq i$, and maximize the proportion of $i$'s by flipping $\l$ labels from $j\neq i$ to $i$. 
The caveat here is that if there are fewer than $\l$ of whichever label we want to flip, we are limited by $c_i$ or $\sum_{j\neq i} c_j$, depending on flipping direction. 
\begin{equation}\label{eq:cprob-labels}
    \aprob_i(\flipB{\l}{}(\T)) =  \left [ \frac{c_i-\min(\l,c_i)}{|\T|}, \frac{c_i+\min(\l,\sum_{j\neq i} c_j)}{|\T|}\right ]
\end{equation}

\begin{example}
Given $\p = \flipB{1}{}$, we have $\aprob_{\color{ForestGreen}\checkmark}(\p(\toy)) =  \left[\frac{4}{9},\frac{6}{9}\right]$.
\end{example}

Fake data bias models can be abstracted similarly (see Appendix).

\paragraph{Abstracting targeted bias models}
We now show how to abstract targeted bias models,
where a function $\pred$ restricts the affected samples. 
To begin, we limit $g$ to only condition on features, not the label.
In the case of $\miss^g$, the definition of $\aprob$ does not change, because even if we restrict the characteristics of the elements that we can add, we can still add up to $m$ elements with any label.

In the case of label-flipping, we constrain
the parameter $\l$ to be no larger than $|\T_\pred|$.
Formally, we define $\l_i=\min(\l, |\{(x,i)\in T: \pred(x)\}|)$ and then
\begin{equation}\label{eq:cprob-labels-targeted}
    \aprob_i(\flipB{\l}{\pred}(\T)) =  \left[ \frac{c_i-\l_i}{|\T|},\ \frac{c_i + \min(\sum_{j\neq i} \l_j, \l)}{|\T|} \right ]
\end{equation}

The definition for fake data is similar (see Appendix). 
The above definition is sound when $\pred$ conditions on the label;
however, the Appendix includes a more precise definition of $\aprob$ for 
that scenario.

\paragraph{Abstracting composite bias models}\label{sec:unified}
Now  consider a composite bias model consisting of all the basic bias models. Intuitively, $\aprob_i$ will need to reflect changes in $c_i$ that occur from adding data, flipping labels, and removing data. First, we consider a bias model with just one instance of each $\miss$, $\flip$, and $\fake$, i.e.,
$\p = [\missB{\m}{\pred_1}, \flipB{\l}{\pred_2}, \fakeB{\f}{\pred_3}]$. 
We define auxiliary variables $\l_i = \min(\l, |\{(x,i)\in T: \pred_2(x,i)\}|)$ and $\f_i = \min(\f, |\{(x,i)\in \T : \pred_3(x,i)\}|)$. Intuitively, these variables represent the number of elements with label $i$ that we can alter. Conversely, to represent the elements with a label other than $i$, we will use $\l'_i = \min(\l,\Sigma_{j\neq i} l_j)$ and $\f'_i=\min(\f,\Sigma_{j\neq i}c_j)$.
%
\begin{equation}\label{eq:cprob-unified}
    \aprob_i(\p(\T)) = \left[ \max\left(0, \frac{c_i-\l_i-\f_i}{|\T|-\f_i+\m}\right), \min\left(1, \frac{c_i+\l'_i+\m}{|\T|-\f'_i+\m}\right) \right]
\end{equation}

Extending the above definition to allow multiple uses of the same basic model,
e.g., $\{\flipB{\l_1}{\pred_1}, \flipB{\l_2}{\pred_2}\}$ is simple: essentially, we just sum $\l_1$ and $\l_2$. A full formal definition is in the Appendix. 

\begin{theorem}
$\aprob$ is a sound abstract transformer.
(In the Appendix, we also show that $\aprob$ is precise.)
\end{theorem}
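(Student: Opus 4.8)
The plan is to prove soundness directly from its definition: for every label $i$ and every reachable dataset $\T' \in \p(\T)$, the true proportion $\prob_i(\T') = c_i'/n'$ --- where $c_i'$ is the number of label-$i$ points in $\T'$ and $n' = |\T'|$ --- must lie inside the interval produced by the corresponding formula. I would organize the argument around a single \emph{multivariate monotonicity} analysis and treat the basic models ($\miss$, $\flip$, $\fake$, and their targeted variants) as the special cases of the composite formula \cref{eq:cprob-unified} obtained by zeroing the unused budgets; the basic intervals \cref{eq:cprob-missing,eq:cprob-labels,eq:cprob-labels-targeted} then fall out of the same lemmas and can be verified as warm-ups to keep the reasoning transparent.

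The engine is a set of one-step facts about how each atomic edit moves $\prob_i$, all valid under the invariant $c_i' \le n'$ that every dataset satisfies. Adding a label-$i$ point sends $(c_i',n') \mapsto (c_i'+1,n'+1)$ and weakly increases the ratio (since $a/b \le (a+1)/(b+1)$ whenever $a \le b$); adding a non-$i$ point weakly decreases it; flipping $j \to i$ raises only the numerator; flipping $i \to j$ lowers it; removing a label-$i$ point gives $(c_i'-1,n'-1)$ and weakly decreases the ratio (since $(a-1)/(b-1) \le a/b$ when $a \le b$); removing a non-$i$ point weakly increases it. I would then assemble these into the single ratio whose numerator counts label-$i$ points after all edits and whose denominator is the post-edit size, and compute the sign of its partial derivatives in the edit counts over the budget box. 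The decisive computations are that this ratio is \emph{decreasing} in the number of label-$i$ removals exactly because $c_i \le |\T|$ (the numerator never exceeds the denominator), and \emph{increasing} in the number of added label-$i$ points precisely on the region where the proportion stays $\le 1$.

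With monotonicity in hand, the lower endpoint of \cref{eq:cprob-unified} is obtained by pushing every \emph{decreasing} edit to its cap --- add $\m$ non-$i$ points, flip $\l_i$ label-$i$ points away, remove $\f_i$ label-$i$ points --- giving numerator $c_i - \l_i - \f_i$ and size $|\T| - \f_i + \m$; the auxiliary caps $\l_i,\f_i$ are defined exactly as the numbers of (possibly targeted) label-$i$ points that can be flipped or removed, and the symmetric all-increasing corner yields the upper endpoint with $\l'_i,\f'_i$. Crucially, because the ratio is monotone on the \emph{continuous} extension of the budget box, the corner value bounds every reachable $\prob_i(\T')$ even when the corner dataset is not itself realizable (e.g.\ when the flip and removal budgets overlap on the same label-$i$ points): infeasibility can only move the true extremum inward, leaving the formula endpoint on the correct side. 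The outer $\max(0,\cdot)$ and $\min(1,\cdot)$ then close the two degenerate regimes --- a negative corner numerator or a corner proportion exceeding $1$ --- since every true proportion already lies in $[0,1]$, so clamping to $[0,1]$ never excludes an achievable value.

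I expect the composite case to be the main obstacle. Unlike the basic formulas, the three edit types share one dataset, so I must verify that each move keeps the correct monotone direction as $c_i'$ and $n'$ drift --- which is where the invariant $c_i' \le n'$ has to be re-established after every step --- and that the tightened denominators $|\T| - \f_i + \m$ and $|\T| - \f'_i + \m$ (rather than the loose $|\T| + \m$) are justified by the correlation between the removal count and the size. The second delicate point is the added-label-$i$ direction for the upper bound, whose monotonicity holds only while the proportion is below $1$; I would handle it by the case split $U \ge 1$ (clamp active, soundness immediate) versus $U < 1$, which forces $\l'_i + \f'_i < \sum_{j\ne i} c_j$ and hence supplies the needed sign throughout the box. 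Finally, the label-conditioned targeting deferred to the Appendix only changes which points are countable toward each cap, so it is absorbed by recomputing $\l_i,\f_i,\l'_i,\f'_i$ against the sets $\{(x,i)\in\T : \pred(x,i)\}$; the parenthetical precision claim would follow from the same corner analysis by exhibiting, in the non-degenerate regime, datasets in $\p(\T)$ that attain each endpoint.
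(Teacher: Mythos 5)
Your proposal is correct and takes essentially the same route as the paper's proof: the paper likewise treats the basic models as special cases of the composite one, writes the proportion of label-$i$ points as a multivariate function of the edit counts, determines the signs of its partial derivatives over the budget box (noting monotonicity transfers from the reals to the integer points), and reads off the interval endpoints at the extreme corners. The only minor bookkeeping difference is that the paper enlarges the flip/removal caps ($\l_{a_i}$, $\f_{a_i}$, etc.) to account for edits applied to points added by $\miss$, whereas you keep the tighter caps and argue that infeasible corners can only be conservative --- both yield soundness by the same corner analysis.
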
\label{thm:soundness}

\subsection{An abstract decision-tree algorithm}\label{sec:abstract_alg}
We define the remaining abstract transformers, with the goal of certification.
Our definitions are based on Drews et al.~\cite{drews-pldi};
the key difference is the $\T_\phi$
operation, which is dependent on the bias model.

\paragraph{Filtering}
We need $\afilter$, the abstract analogue of $T_\phi$. For $\flip$ and $\fake$, we define $\p(\T)_{\phi}$~= $\p(\T_{\phi})$. But for $\miss$, we have to alter the bias model, too, since after filtering on $\phi$ we only want to add new elements that satisfy $\phi$. We define $\missB{\m}{\pred}(\T)_{\phi} = \missB{\m}{\pred\land\phi} (\T_{\phi})$. Filtering composite bias models applies these definitions piece-wise (see a full definition and soundness proof in the Appendix). 

\paragraph{Gini impurity}
We lift $\imp$ to interval arithmetic: 
$\aimp(\T) = \sum_{i=1}^n \aprob_i(\T) ([1,1]-\aprob_i(\T))$. 

\paragraph{Cost}
Recall that $\score$ relies on $|\T_{\phi}|$. 
We want an abstract analogue of $|T_{\phi}|$
that represents the range of sizes of datasets in $\afilter$ and 
not the number of datasets in $\afilter$. 
To this end, we define an auxiliary function $\size$ where $\size(\afilter)=[a,b]$ such that
$a=\min \{|T'| : T'\in\afilter \}$ and
$b=\max\{|T'| : T'\in\afilter\}$. 

Then, we define the cost of splitting on $\phi$ as follows (recall that the operators use interval arithmetic):
\begin{equation}
    \ascore(\p(\T),\phi) = \size(\afilter)\times\aimp(\afilter) + \size(\afilterneg)\times\aimp(\afilterneg)
\end{equation}

\newcommand{\minimalInterval}{minimal interval}
\newcommand{\minInterval}{\mathsf{minInterval}}
\newcommand{\lub}{\mathit{lub}}
Since $\size$ and $\aimp$ return intervals, $\ascore$ will be an interval, as well. 

\paragraph{Best split}
To find the set of best predicates, we identify the \emph{least upper bound} ($\lub$) of any predicate's $\score$. Then, any predicate whose $\score$ overlaps with $\lub$ will be a member of the set of best predicates, too.
Formally, $\lub = \min_{\phi \in \Phi} b_\phi $, where $\ascore(\phi) = [a_\phi,b_\phi]$ 
Then, we define
$\abestsplit(\p(\T)) = \{\phi\in\Phi\mid a_\phi \leq \lub\}$.

\paragraph{Inference}
Finally, for inference, we evaluate every predicate computed by $\abestsplit$ on $x$ and collect all possible prediction labels. Intuitively, we break the problem into two pieces: first, we evaluate all predicates $\phi$ that satisfy $\phi(x)$ (i.e., when $x$ is sent down the left branch of the tree), and then predicates that satisfy $\neg\phi(x)$, (i.e., when $x$ is sent down the right branch of the tree). Formally, we compute:
\begin{equation}
    \ainfer(\p(\T),\Phi^a,x) = \underbrace{\bigcup_{\phi(x)} \argmax_i\
    \aprob_i(\p(\T)_\phi)}_{\textrm{labels for predicates $\phi$ s.t. $\phi(x)$}}
    \ \cup\  
    \underbrace{\bigcup_{\neg \phi(x)} \argmax_i\ \aprob_i(\p(\T)_{\neg\phi}}_{\textrm{labels for predicates $\phi$ s.t. $\neg\phi(x)$}})
 \end{equation}
where the range of $\cup$ is over predicates in $\Phi^a$.
Since our goal is to prove robustness,
we only care whether $| \ainfer(\p(\T),\Phi^a,x)| = 1$,
i.e., all datasets produce the same prediction.

\begin{theorem}\label{thm:soundness-entire}
If $| \ainfer(\p(\T),\Phi^a,x)| = 1$, where $\Phi^a = \bestsplit^a(\p(\T))$,
then $x$ is robust  (\cref{eq:drob}).
\end{theorem}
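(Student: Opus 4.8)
The plan is to run the standard abstract-interpretation soundness argument: I would show that each operator in the concrete learning pipeline is over-approximated by its abstract counterpart, and then chain these facts so that the concrete prediction of \emph{every} dataset $\T' \in \p(\T)$ is forced to lie inside the set returned by $\ainfer$. Once that containment holds, the hypothesis that $\ainfer(\p(\T),\Phi^a,x)$ is a singleton immediately makes all concrete predictions coincide, which is exactly \cref{eq:drob}. Concretely, I would fix an arbitrary $\T' \in \p(\T)$, write $\phi' = \bestsplit(\T')$ for its concrete optimal split, and track the label $\infer(\T',\phi',x)$ through each layer of the abstraction.

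First I would assemble the intermediate soundness facts. Soundness of $\aprob$ is already established above, and soundness of filtering (that $\afilter$ contains $\T'_\phi$ for every $\T' \in \p(\T)$, including the reindexed $\miss$ case) I would assume from the Appendix. Since $\aimp$, $\size$, and $\ascore$ are built from $\aprob$ and filtering purely through interval arithmetic, their soundness follows from the fact that interval arithmetic is itself a sound over-approximation: for every $\T'$ and every $\phi$ we obtain $\imp(\T'_\phi) \in \aimp(\afilter)$, $|\T'_\phi| \in \size(\afilter)$, and hence $\score(\T',\phi) \in \ascore(\p(\T),\phi)$. I would flag that the repeated occurrence of $\aprob_i$ inside $\aimp$ makes the resulting interval loose, but looseness only costs precision, never soundness.

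The crux is soundness of $\abestsplit$, namely $\phi' \in \Phi^a$, and this is the step I expect to be the main obstacle, since it is where the concrete $\argmin$ must be reconciled with overlapping abstract cost intervals. Writing $\ascore(\p(\T),\phi) = [a_\phi,b_\phi]$ and $\lub = \min_{\phi} b_\phi$, I would argue as follows: let $\psi_0$ attain $\lub = b_{\psi_0}$; by soundness of $\ascore$ we have $\score(\T',\psi_0) \leq b_{\psi_0} = \lub$; because $\phi'$ minimizes the concrete cost, $\score(\T',\phi') \leq \score(\T',\psi_0) \leq \lub$; and since $a_{\phi'} \leq \score(\T',\phi')$, we conclude $a_{\phi'} \leq \lub$, i.e. $\phi' \in \Phi^a$. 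This is precisely why $\Phi^a$ is defined through the least upper bound $\lub$ rather than through a single minimizing predicate: a predicate that is concretely optimal for \emph{some} $\T'$ need not minimize any upper bound, yet its lower bound is always dominated by $\lub$.

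Finally I would handle $\ainfer$ via soundness of the abstract $\argmax$ over intervals, which rests on a short sublemma: if $v_i \in [a_i,b_i]$ for each $i$ and $m = \max_i a_i$, then the concrete maximizer $k = \argmax_i v_i$ satisfies $a_k \leq m \leq b_k$ (the left inequality is definitional, and the right holds because $v_k \geq v_j \geq m$ for the index $j$ attaining $m$, while $v_k \leq b_k$), so $[a_k,b_k]$ contains $m$ and $k$ belongs to the abstract $\argmax$ set. Applying this to the interval vector $\aprob_i(\p(\T)_{\phi''})$, where $\phi''$ is $\phi'$ or $\neg\phi'$ according to whether $\phi'(x)$ holds, and using $\phi' \in \Phi^a$ to ensure $\phi'$ is iterated over in the correct branch of the $\ainfer$ union, yields $\infer(\T',\phi',x) \in \ainfer(\p(\T),\Phi^a,x)$. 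As $\T'$ was arbitrary and the right-hand side is a singleton $\{i\}$, every dataset in $\p(\T)$ predicts $i$, which establishes \cref{eq:drob}.
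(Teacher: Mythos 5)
Your proof is correct and takes essentially the same route as the paper's: it chains the soundness of $\aprob$, filtering, $\size$, $\aimp$, and $\ascore$, establishes $\bestsplit(\T')\in\Phi^a$ by exactly the paper's $\lub$ argument (comparing the concrete minimizer's lower bound against the minimum upper bound attained by some $\psi_0$), and then concludes from the singleton hypothesis that every $\T'\in\p(\T)$ yields the same prediction. Your interval-$\argmax$ sublemma simply makes explicit a step that the paper's proof of the theorem leaves implicit when it asserts $\infer(\T',\bestsplit(\T'),x)=y$.
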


\begin{example}
Recall \cref{fig:running} 
with bias model $\p = \flip^g_1$, 
where $g$ targets Black people with $\color{Maroon}\times$ label.
$\abestsplit(\p(\toy))$ returns the singleton set $\Phi^a= \{\mathit{score} \leq 3\}$.
Then, given input $x=\langle\mathit{race}=\textrm{Black}, \mathit{score}=7\rangle$,
$\ainfer(\p(\toy), \Phi^a, x) = \{{\color{ForestGreen}\checkmark}\}$,
since $\aprob_{\color{ForestGreen}\checkmark}(\p(\toy)_{\mathit{score} > 3}) = [0.8,1]$, which is greater than 
$\aprob_{\color{Maroon}\times}(\p(\toy)_{\mathit{score} > 3}) = [0,0.2]$.
Therefore, the learner is robust on $x$.
\end{example}


\section{Experimental evaluation}
\label{experiments}
We implement our certification technique in C++
and call it \NAME, as it extends Antidote~\cite{drews-pldi} to programmable bias models. 
To learn trees with depth $> 1$, we apply the presented procedure
recursively.
We use Antidote's \emph{disjunctive domain}, which is beneficial
for certification~\cite{drews-pldi} but requires a large amount of memory because it keeps track of many different datasets on each decision-tree path.
%
We evaluate on Adult Income~\cite{uci-data} (training $n$=\num{32561}), COMPAS~\cite{compas} ($n$=\num{4629}), and Drug Consumption~\cite{drug-data} ($n$=\num{1262}). A fourth dataset, MNIST 1/7 ($n$=\num{13007}), is in the Appendix. For all datasets, we use the standard train/test split if one is provided; otherwise, we create our own train/test splits, which are available in our code repository at \url{https://github.com/annapmeyer/antidote-P}.





For each dataset, we choose the smallest tree depth where
accuracy improves no more than 1\% at the next-highest depth.
For Adult Income and MNIST 1/7, this threshold is depth 2 (accuracy 83\% and 97\%, respectively); for COMPAS and Drug Consumption it is depth 1 (accuracy 64\% and 76\%, respectively). 
We  run additional experiments on COMPAS and Drug Consumption at depths 2 and 3 to evaluate how tree depth influences \NAME's efficiency (see Appendix).

A natural baseline is enumerating all datasets in the bias set
but that is infeasible---see bias-set sizes in \cref{tab:all_data}.
To our knowledge, our technique (extended from~\cite{drews-pldi}), is the only method to certify bias robustness of decision-tree learners.

\subsection{Effectiveness at certifying robustness}
Table \ref{tab:all_data} shows the results. Each entry in the table indicates the percentage of test samples for which \NAME\ can prove robustness with a given bias model and the shading indicates the size of the bias set, $|\p(\T)|$. We see that even though the perturbation sets are very large---sometimes infinite---we are able to certify robustness for a significant percentage of elements.

\newcommand{\nottoobig}{NotTooBig} 
\newcommand{\prettybig}{PrettyBig} 
\newcommand{\verybig}{VeryBig} 
\newcommand{\gigantic}{Gigantic} 
\newcommand{\megagigantic}{MegaGigantic} 
\newcommand{\infinite}{Infinite}

\definecolor{Infinite}{rgb}{1,0.38,0.25}
\definecolor{MegaGigantic}{rgb}{1.0,0.50,0.36}
\definecolor{Gigantic}{rgb}{1.0,0.62,0.47}
\definecolor{VeryBig}{rgb}{1.0,0.74,0.58}
\definecolor{PrettyBig}{rgb}{1.0,0.86,0.69}
\definecolor{NotTooBig}{rgb}{1.0,0.98,0.80}

\begin{table}[t]
\small    \centering
    \caption{Certification rates for various bias models. Targeted bias models use predicates $(\emph{race}=\mathrm{Black} \text{ and } \emph{label}=\mathrm{positive})$ for COMPAS and $(\emph{gender}=\mathrm{female} \text{ and } \emph{label}=\mathrm{negative})$ for Adult Income. Composite models show cumulative bias, e.g., $0.2\%$ $\miss$ + $\fake$ bias equates to $0.1\%$ bias of each $\miss$ and $\fake$.
    Empty entries indicate tests that failed due to memory constraints (96GB).
    }
    \label{tab:all_data}
    \begin{tabular}{llrrrrrr}\toprule
    & & \multicolumn{6}{c}{\textbf{Bias amount as a percentage of training set}}\\\cmidrule(lr){3-8}
    \textbf{Bias type} & \textbf{Dataset} & 0.05 & 0.1 & 0.2 & 0.4 & 0.7 & 1.0 \\\midrule
   \multirow{5}{*}{\begin{tabular}[c]{@{}l@{}} \normalsize{$\miss$} \\ (missing data) \end{tabular}}
        & Drug Consumption      & 94.5\cellcolor{\prettybig} & 94.5\cellcolor{\prettybig}  & 94.5\cellcolor{\prettybig} & 94.5\cellcolor{\verybig}  & 85.1\cellcolor{\gigantic} & 85.1\cellcolor{\gigantic} \\
        & COMPAS                & 89.0\cellcolor{\prettybig} & 81.9\cellcolor{\prettybig} & 52.9\cellcolor{\prettybig} & 45.3\cellcolor{\verybig} & 9.3\cellcolor{\gigantic} & 9.2\cellcolor{\gigantic} \\
        & Adult Income (AI)         & 96.0\cellcolor{\infinite} & 86.9\cellcolor{\infinite} & 72.8\cellcolor{\infinite} & 60.9\cellcolor{\infinite} & & \\\cmidrule(lr){2-8}
        & COMPAS targeted       & 89.0\cellcolor{\prettybig} & 89.0\cellcolor{\prettybig} & 81.9\cellcolor{\prettybig} & 52.9\cellcolor{\verybig} & 47.8\cellcolor{\gigantic} & 42.3\cellcolor{\gigantic} \\
        & AI targeted & 98.8\cellcolor{\infinite} & 97.2\cellcolor{\infinite} & 86.6\cellcolor{\infinite} & 73.0\cellcolor{\infinite} & 62.0\cellcolor{\infinite} & 31.6\cellcolor{\infinite} \\\midrule  
        
    \multirow{5}{*}{\begin{tabular}[c]{@{}l@{}} \normalsize{$\flip$} \\ {(label-flipping)}\end{tabular}}  
        & Drug Consumption      & 94.5\cellcolor{\nottoobig} & 94.5\cellcolor{\nottoobig} & 94.5\cellcolor{\nottoobig} & 92.1\cellcolor{\prettybig} & 85.1\cellcolor{\prettybig} &  7.1\cellcolor{\prettybig} \\
        & COMPAS                & 81.9\cellcolor{\prettybig} & 71.5\cellcolor{\prettybig} & 47.8\cellcolor{\prettybig} & 20.6\cellcolor{\verybig} &  3.0\cellcolor{\verybig} &  3.0\cellcolor{\gigantic} \\
        & Adult Income          & 95.8\cellcolor{\verybig} & 72.9\cellcolor{\gigantic} & 70.2\cellcolor{\megagigantic} & 34.8\cellcolor{\megagigantic} & & \\\cmidrule(lr){2-8}
        & COMPAS targeted       & 89.0\cellcolor{\nottoobig} & 81.9\cellcolor{\prettybig} & 71.5\cellcolor{\prettybig} & 50.5\cellcolor{\prettybig} & 43.2\cellcolor{\verybig} & 24.2\cellcolor{\verybig} \\
        & AI targeted & 98.6\cellcolor{\verybig} & 97.0\cellcolor{\verybig} & 74.4\cellcolor{\gigantic} & 71.0\cellcolor{\gigantic} & 45.4\cellcolor{\gigantic} & 25.8\cellcolor{\megagigantic} \\\midrule
                  
    \multirow{3}{*}{\begin{tabular}[c]{@{}l@{}} \normalsize{$\miss$ + $\fake$}\\ {(missing + fake)}\end{tabular}}
        & Drug Consumption & 94.5\cellcolor{\prettybig} & 94.5\cellcolor{\prettybig} & 94.5\cellcolor{\prettybig} & 94.5\cellcolor{\verybig} & 85.1\cellcolor{\gigantic} & 85.1\cellcolor{\gigantic} \\
        & COMPAS           & 81.9\cellcolor{\prettybig} & 76.2\cellcolor{\prettybig} & 52.9\cellcolor{\prettybig} & 43.2\cellcolor{\verybig} &  9.3\cellcolor{\gigantic} &  9.3\cellcolor{\gigantic} \\
        & Adult Income     & 96.0\cellcolor{\infinite} & 95.6\cellcolor{\infinite} & 72.8\cellcolor{\infinite} & 68.3\cellcolor{\infinite} & 36.2\cellcolor{\infinite} & \\\midrule
        
   \multirow{3}{*}{\begin{tabular}[c]{@{}l@{}} \normalsize{$\miss$ + $\flip$} \\ {(missing + label-flipping)}\end{tabular}}
        & Drug Consumption & 94.5\cellcolor{\prettybig} & 94.5\cellcolor{\prettybig} & 92.1\cellcolor{\prettybig} & 92.1\cellcolor{\verybig} & 85.1\cellcolor{\gigantic} & 38.0\cellcolor{\gigantic} \\
        & COMPAS           & 81.9\cellcolor{\prettybig} & 71.5\cellcolor{\prettybig} & 50.5\cellcolor{\prettybig} & 41.6\cellcolor{\verybig} &  9.3\cellcolor{\gigantic} &  3.0\cellcolor{\gigantic} \\
        & Adult Income     & 95.9\cellcolor{\infinite} & 74.3\cellcolor{\infinite} & 71.1\cellcolor{\infinite} & 49.0\cellcolor{\infinite} & & \\\bottomrule
    \end{tabular}
    \begin{tabular}{lllllll}
         Bias-set size color scheme~~~~ & $<10^{10}$\cellcolor{\nottoobig} & $<10^{50}$\cellcolor{\prettybig} & $<10^{100}$\cellcolor{\verybig} & $<10^{500}$\cellcolor{\gigantic} & $>10^{500}$\cellcolor{\megagigantic} & infinite\cellcolor{\infinite} 
    \end{tabular}
\end{table}

\paragraph{By dataset} Certification rates vary from 98.8\% robustness for $\missB{0.05\%}{}$ on Adult Income
(i.e., the predictions of 98.8\% of the points in the test set do not change if up to 0.05\% new points are added to the training set)
to just 3\% robustness for $\flipB{1\%}{}$ on COMPAS. 
Even for a single bias model, the certification rates vary widely: under $\flipB{0.2\%}{}$, we can verify 94.5\% of samples as robust for Drug Consumption, but only 70.2\% for Adult Income and 47.8\% for COMPAS. 
We posit that these differences stem from inherent properties of the datasets.
The normalized $\score$ of the optimal top-most split is $0.30$ for Adult Income, $0.35$ for Drug Consumption, and $0.45$ for COMPAS (recall that a lower $\score$ corresponds to greater information gain).
As a result, biasing a fixed percentage of data yields greater instability for COMPAS, since the data already exhibited poorer separation.

\paragraph{By bias model} There are also differences in certification rates between bias models. 
$\flip$ is more destructive to robustness because flipping a single label results in a symmetric difference of 2 from the original dataset (as if we removed an element from the set and then inserted a new one with a flipped label),
while adding a single item results in a symmetric difference of 1.

The composite bias models display similar dataset- and bias model-based trends. Notably, $\miss$ + $\fake$ yields a lower certifiable-robustness rate than $\flip$. E.g., under $\flipB{0.1\%}{}$, we can certify 71.5\% of COMPAS test samples as robust. But for $\missB{0.1\%}{}$ + $\fakeB{0.1\%}{}$ (that is, 0.2\% bias total), we are only able to certify 50.5\% of test samples as robust. This shows that $\flip$ is a useful modeling tool for situations where we believe the features of all data points to be trustworthy, but suspect that some labels may be incorrect.
The targeted bias models allow for greater certification rates than the non-targeted versions; this is expected because they result in smaller bias sets.

In summary, \NAME\ \textbf{can effectively certify robustness
across a variety of bias models, but its success depends on properties of the dataset such as separability}.

\subsection{Demographic variations}\label{sec:demo}
We evaluated differences in certifiable-robustness rates across demographic groups in all three datasets. We present results from COMPAS and Adult Income in \cref{fig:demo} (results for Drug Consumption are in the Appendix; they are less interesting due to a lack of representation in the dataset).


\paragraph{COMPAS}
 \cref{fig:demo}a shows that under $\flip$, White women are robust at a higher rate than any other demographic group, and that Black men and women are the least robust. Notably, for $\flipB{0.4\%}{}$, we are able to certify robustness for 50.4\% of White women, but 0\% of Black people. There is also a significant gap between White women and White men at this threshold (50.4\% vs. 38.8\%). 
We can explain the gaps in certification rates of different subgroups by looking at the training data. In the COMPAS dataset, the same predicate provides the optimal split for every race-gender subgroup, but for White women the resulting split has $\score=0.41$ versus $\score=0.46$ for Black people. It is not clear whether this difference stems from sampling techniques or inherent differences in the population, but regardless, the end result is that \textbf{predictions made about Black people from decision trees trained on COMPAS are more likely to be vulnerable to data bias}. 

To validate that the disparities in certifiable-robustness rates are due to inherent dataset properties rather than the abstraction, we performed random testing by perturbing the COMPAS dataset to try to find robustness counterexamples, i.e., datasets in the bias set that yield conflicting predictions on a given input. We found more counterexamples to robustness for Black people than for White people, which is further evidence for our claim that the robustness disparities are inherent to the dataset. 

\begin{figure}[t]
\centering
\tiny
\pgfplotsset{filter discard warning=false}

\pgfplotscreateplotcyclelist{whatever}{%
    black,thick,dashed,every mark/.append style={fill=blue!80!black},mark=none\\%
    gray,thick,dashed,every mark/.append style={fill=red!80!black},mark=none\\%
    black,thick,every mark/.append style={fill=blue!80!black},mark=none\\%
    gray,thick,every mark/.append style={fill=red!80!black},mark=none\\%
    }
    
\begin{tikzpicture}
    \begin{groupplot}[
            group style={
                group size=4 by 1,
                horizontal sep=.1in,
                vertical sep=.05in,
                ylabels at=edge left,
                yticklabels at=edge left,
                xlabels at=edge bottom,
                xticklabels at=edge bottom,
            },
            height=.8in,
            xlabel near ticks,
            ylabel near ticks,
            scale only axis,
            width=0.2*\textwidth,
            xtick={0,.2,.4,.6,.8,1,1.2,1.4},
            minor xtick={0,0.1,0.2,0.3,0.4,0.5,0.6,0.7,0.8,0.9,1,1.1,1.2,1.3},
            xmin=0,
            xmax=1.1
        ]

        \nextgroupplot[
            ylabel=Certifiable robustness (\%),
            ymin=0,
            ymax=100,
            cycle list name=whatever,
            xlabel=Bias amount (\%)]
        \addplot table [x=bias,y=whitemen, col sep=comma]{data/compas1_demo.csv};
        \addplot table [x=bias,y=whitewomen, col sep=comma]{data/compas1_demo.csv};
        \addplot table [x=bias,y=blackmen, col sep=comma]{data/compas1_demo.csv};
        \addplot table [x=bias,y=blackwomen, col sep=comma]{data/compas1_demo.csv};

        \nextgroupplot[
            ymin=0,
            ymax=100,
            xtick={0,.2,.4,.6,.8,1,1.2,1.4,1.6,1.8},
            minor xtick={0,0.1,0.2,0.3,0.4,0.5,0.6,0.7,0.8,0.9,1,1.1,1.2,1.3,1.4,1.5,1.6,1.7,1,8},
             cycle list name=whatever
        ]
        \addplot table [x=bias, y=whitemen, col sep=comma]{data/compas_labels_targeted_demo.csv};
        \addplot table [x=bias, y=whitewomen, col sep=comma]{data/compas_labels_targeted_demo.csv};
        \addplot table [x=bias, y=blackmen, col sep=comma]{data/compas_labels_targeted_demo.csv};
        \addplot table [x=bias, y=blackwomen, col sep=comma]{data/compas_labels_targeted_demo.csv};

        \nextgroupplot[
            ymin=0,
            ymax=100,
            ytick={0,20,40,60,80},
             cycle list name=whatever,
        ]
        \addplot table [x=bias, y=whitemen, col sep=comma]{data/compas_demo_targeted_backwards_demo.csv};
        \addplot table [x=bias, y=whitewomen, col sep=comma]{data/compas_demo_targeted_backwards_demo.csv};
        \addplot table [x=bias, y=blackmen, col sep=comma]{data/compas_demo_targeted_backwards_demo.csv};
        \addplot table [x=bias, y=blackwomen, col sep=comma]{data/compas_demo_targeted_backwards_demo.csv};

        \nextgroupplot[
            ymin=1,
            ymax=100,
            ytick={0,20,60,40,80},
            legend style={at={(1,0)},anchor=south},
             cycle list name=whatever,
        ]
        \addplot table [x=bias, y=whitemen, col sep=comma]{data/adult_labels_demo.csv};
        \addplot table [x=bias, y=whitewomen, col sep=comma]{data/adult_labels_demo.csv};
        \addplot table [x=bias, y=blackmen, col sep=comma]{data/adult_labels_demo.csv};
        \addplot table [x=bias, y=blackwomen, col sep=comma]{data/adult_labels_demo.csv};

\legend{White men,White women,Black men,Black women};

    
    \end{groupplot}
    \node[draw] at (2.45,1.7) {\normalsize a};
    \node[draw] at (5.55,1.7) {\normalsize b};
    \node[draw] at (8.55,1.7) {\normalsize c};
    \node[draw] at (11.55,1.7) {\normalsize d};

\end{tikzpicture}
\caption{Left to right: Certifiable robustness by demographic group on (a) 
COMPAS  under $\flipB{}{}$; (b) COMPAS under $\flipB{}{\pred}$ where $\pred \triangleq (\emph{race}=\textrm{Black} \land \emph{label}=\textrm{positive})$; (c)
COMPAS under $\flipB{}{\pred}$ where $\pred \triangleq (\emph{race}=\textrm{White} \land \emph{label}=\textrm{negative})$; (d) Adult Income under $\flipB{}{}$.}\label{fig:demo}
\end{figure}

\paragraph{Targeted bias models (COMPAS)}
If we choose $g \triangleq (\emph{race}=\textrm{Black} \land \emph{label}=\textrm{positive})$ (\cref{fig:demo}b) in $\flipB{}{g}$ to model the real-world situation where structural or individual racism can lead to increased policing and convictions among Black people in the U.S., then there are generally higher robustness rates at moderate bias levels (e.g., {$\sim$}50\% robustness for all demographic groups at 0.4\% poisoning).
%
However, as the amount of bias increases, 
a gap between White and Black certification rates emerges 
(in exact terms, 32.9\% of White test samples are certifiably robust versus 0\% of Black test samples starting at 1.1\% bias and continuing through, at least, 10.8\% bias). 
It is unclear whether this trend stems from inherent dataset properties, 
or is due to the over-approximate nature of the abstraction.

By contrast, using $g \triangleq (\emph{race}=\textrm{White} \land \emph{label}=\textrm{negative})$ (\cref{fig:demo}c) to describe that White people may be under-policed or under-convicted due to White privilege nearly eliminates discrepancies between demographic groups.
%
%
In particular, Black men (previously the least-robust subgroup) are the most robust of any population. 
$\flipB{}{g}$ and $\flipB{}{g'}$ differ only on how they describe societal inequities: are White people under-policed, or are Black people over-policed? 
However, the vast differences in demographic-level robustness rates between $\flipB{}{g}$ and $\flipB{}{g'}$ shows that \textbf{the choice of predicate  is crucial when using targeted bias models}. 
More experimentation is needed to understand why these results occur, and how consistent they are across different train/test splits of the data.
However, our preliminary results indicate that \NAME\ could be a useful tool for social scientists to understand how data bias can affect the reliability of machine-learning outcomes.

\paragraph{Adult Income} \cref{fig:demo}d shows robustness by demographic group for $\flip$. We see that Black men have about a 5\% lower robustness rate than other demographic groups and that at higher bias levels, White women also have about a 5\% lower robustness rate than White men or Black women. Using $\flipB{}{g}$ where $\pred = (\emph{race}=\textrm{Female} \land \emph{Label}=\textrm{negative})$ led to similar results (see Appendix). 

\section{Conclusions and broader impacts}\label{sec:conclusions}
We saw that our decision-tree-learner abstraction is able to verify pointwise robustness over large and even infinite bias sets. 
These guarantees permit increased confidence in the trees' outputs because they certify that data bias has not affected the outcome (within a certain threshold). 
However, a couple of tricky aspects---and ones that we do not attempt to address---are knowing whether the assumptions underlying the bias model are correct, or whether our bias framework is even capable of representing all instances of real-world bias. 
If the user does not specify the bias model faithfully, then any proofs may not be representative.
Also, our tool only certifies robustness, not accuracy. 
Therefore, it may certify that a model will always output the \emph{wrong} label on a given data point. 
This behavior is linked to a shortcoming of many machine-learning audits: our tool cannot determine what is an appropriate use of machine learning.
Machine learning is often used to promote and legitimize uses of technology that are harmful or unethical. In particular, we want to call out our use of the COMPAS dataset: we feel that it is illustrative to show how certifiable-robustness rates can vary widely between different demographic groups and be sensitive to subtle shifts in the bias model. However, this use should not be taken as an endorsement for the deployment of recidivism-prediction models.
%

Another limitation is that our framework can only certify decision-tree learners. In practice, many machine learning applications use more sophisticated algorithms that we do not address here. Future work to generalize our ideas to other machine learning architectures would increase the utility of this style of robustness certification.

Returning to our work, \NAME\ has a place in data scientists' tool-kits as a powerful technique to understand robustness, and potential vulnerabilities, of data bias in decision-tree algorithms. 
An important direction for future work is to develop effective techniques for falsification of robustness (i.e., techniques to find minimal dataset perturbations that break robustness). We performed initial experiments in this area using brute-force techniques (i.e., randomly perturb data points, train a new decision tree and see whether the test sample's classification changes under the new tree---see the Appendix for more details). The results were promising in that we were able to find counter-examples to robustness for some data points, but there remain many data points that are neither certifiably robust via \NAME\ nor falsified as robust using random testing. Random testing was an interesting proof of concept, but we recommend that the future focus be on developing techniques to identify these dataset perturbations in a more scalable and intelligent way.
Other future work could also improve our approach's utility through tightening the analytical bounds, such as by abstracting over a more complex domain than intervals.

\begin{ack}
We thank the anonymous reviewers for commenting on earlier drafts and Sam Drews for the generous use of his code. 
This work is supported by the National Science Foundation 
grants CCF-1420866, CCF-1704117, CCF-1750965, CCF-1763871, CCF-1918211, CCF-1652140, 
a Microsoft Faculty Fellowship, and gifts and awards from Facebook and Amazon.
\end{ack}

\renewcommand*{\bibfont}{\small}
\printbibliography

@inproceedings{drews-pldi,
author = {Drews, Samuel and Albarghouthi, Aws and D'Antoni, Loris},
title = {Proving Data-Poisoning Robustness in Decision Trees},
year = {2020},
isbn = {9781450376136},
publisher = {Association for Computing Machinery},
url = {https://doi.org/10.1145/3385412.3385975},
doi = {10.1145/3385412.3385975},
abstract = {Machine learning models are brittle, and small changes in the training data can result in different predictions. We study the problem of proving that a prediction is robust to data poisoning, where an attacker can inject a number of malicious elements into the training set to influence the learned model. We target decision-tree models, a popular and simple class of machine learning models that underlies many complex learning techniques. We present a sound verification technique based on abstract interpretation and implement it in a tool called Antidote. Antidote abstractly trains decision trees for an intractably large space of possible poisoned datasets. Due to the soundness of our abstraction, Antidote can produce proofs that, for a given input, the corresponding prediction would not have changed had the training set been tampered with or not. We demonstrate the effectiveness of Antidote on a number of popular datasets.},
booktitle = {Proceedings of the 41st ACM SIGPLAN Conference on Programming Language Design and Implementation},
pages = {1083–1097},
numpages = {15},
keywords = {Abstract Interpretation, Poisoning, Robustness, Adversarial Machine Learning, Decision Trees},
location = {London, UK},
series = {PLDI 2020}
}

@article{albarghouthi-book,
author    = {Aws Albarghouthi},
title     = {Introduction to Neural Network Verification},
journal   = {CoRR},
volume    = {abs/2109.10317},
year      = {2021},
url       = {https://verifieddeeplearning.com},
eprinttype = {arXiv},
eprint    = {2109.10317},
}

@article{levine2020deep,
  title={Deep partition aggregation: Provable defense against general poisoning attacks},
  author={Levine, Alexander and Feizi, Soheil},
  journal={arXiv preprint arXiv:2006.14768},
  year={2020}
}

@inproceedings{cousot1977abstract,
  title={Abstract interpretation: a unified lattice model for static analysis of programs by construction or approximation of fixpoints},
  author={Cousot, Patrick and Cousot, Radhia},
  booktitle={Proceedings of the 4th ACM SIGACT-SIGPLAN symposium on Principles of programming languages},
  pages={238--252},
  year={1977}
}

@article{ranzato_abstract, title={Abstract Interpretation of Decision Tree Ensemble Classifiers}, volume={34}, url={https://ojs.aaai.org/index.php/AAAI/article/view/5998}, DOI={10.1609/aaai.v34i04.5998}, abstractNote={&lt;p&gt;We study the problem of formally and automatically verifying robustness properties of decision tree ensemble classifiers such as random forests and gradient boosted decision tree models. A recent stream of works showed how abstract interpretation, which is ubiquitously used in static program analysis, can be successfully deployed to formally verify (deep) neural networks. In this work we push forward this line of research by designing a general and principled abstract interpretation-based framework for the formal verification of robustness and stability properties of decision tree ensemble models. Our abstract interpretation-based method may induce complete robustness checks of standard adversarial perturbations and output concrete adversarial attacks. We implemented our abstract verification technique in a tool called &lt;em&gt;silva&lt;/em&gt;, which leverages an abstract domain of not necessarily closed real hyperrectangles and is instantiated to verify random forests and gradient boosted decision trees. Our experimental evaluation on the MNIST dataset shows that &lt;em&gt;silva&lt;/em&gt; provides a precise and efficient tool which advances the current state of the art in tree ensembles verification.&lt;/p&gt;}, number={04}, journal={Proceedings of the AAAI Conference on Artificial Intelligence}, author={Ranzato, Francesco and Zanella, Marco}, year={2020}, month={4}, pages={5478-5486} }

@InProceedings{tornblom-abstract,
author="T{\"o}rnblom, John
and Nadjm-Tehrani, Simin",
editor="Romanovsky, Alexander
and Troubitsyna, Elena
and Gashi, Ilir
and Schoitsch, Erwin
and Bitsch, Friedemann",
title="An Abstraction-Refinement Approach to Formal Verification of Tree Ensembles",
booktitle="Computer Safety, Reliability, and Security",
year="2019",
publisher="Springer International Publishing",
address="Cham",
pages="301--313",
abstract="Recent advances in machine learning are now being considered for integration in safety-critical systems such as vehicles, medical equipment and critical infrastructure. However, organizations in these domains are currently unable to provide convincing arguments that systems integrating machine learning technologies are safe to operate in their intended environments.",
isbn="978-3-030-26250-1"
}

@inproceedings{anderson-neuralnets-robustness,
author = {Anderson, Greg and Pailoor, Shankara and Dillig, Isil and Chaudhuri, Swarat},
title = {Optimization and Abstraction: A Synergistic Approach for Analyzing Neural Network Robustness},
year = {2019},
isbn = {9781450367127},
publisher = {Association for Computing Machinery},
url = {https://doi.org/10.1145/3314221.3314614},
doi = {10.1145/3314221.3314614},
abstract = {In recent years, the notion of local robustness (or robustness for short) has emerged as a desirable property of deep neural networks. Intuitively, robustness means that small perturbations to an input do not cause the network to perform misclassifications. In this paper, we present a novel algorithm for verifying robustness properties of neural networks. Our method synergistically combines gradient-based optimization methods for counterexample search with abstraction-based proof search to obtain a sound and (δ -)complete decision procedure. Our method also employs a data-driven approach to learn a verification policy that guides abstract interpretation during proof search. We have implemented the proposed approach in a tool called Charon and experimentally evaluated it on hundreds of benchmarks. Our experiments show that the proposed approach significantly outperforms three state-of-the-art tools, namely AI^2, Reluplex, and Reluval.},
booktitle = {Proceedings of the 40th ACM SIGPLAN Conference on Programming Language Design and Implementation},
pages = {731–744},
numpages = {14},
keywords = {Robustness, Machine learning, Abstract Interpretation, Optimization},
location = {Phoenix, AZ, USA},
series = {PLDI 2019}
}

@INPROCEEDINGS{gehr-neuralNet-robustness,
  author={Gehr, Timon and Mirman, Matthew and Drachsler-Cohen, Dana and Tsankov, Petar and Chaudhuri, Swarat and Vechev, Martin},
  booktitle={2018 IEEE Symposium on Security and Privacy (SP)}, 
  title={AI2: Safety and Robustness Certification of Neural Networks with Abstract Interpretation}, 
  year={2018},
  volume={}, number={}, pages={3-18},
  doi={10.1109/SP.2018.00058}}

@article{singh-neuralNet-robustness,
author = {Singh, Gagandeep and Gehr, Timon and P\"{u}schel, Markus and Vechev, Martin},
title = {An Abstract Domain for Certifying Neural Networks},
year = {2019},
issue_date = {January 2019},
publisher = {Association for Computing Machinery},
address = {New York, NY, USA},
volume = {3},
number = {POPL},
url = {https://doi.org/10.1145/3290354},
doi = {10.1145/3290354},
abstract = {We present a novel method for scalable and precise certification of deep neural networks. The key technical insight behind our approach is a new abstract domain which combines floating point polyhedra with intervals and is equipped with abstract transformers specifically tailored to the setting of neural networks. Concretely, we introduce new transformers for affine transforms, the rectified linear unit (ReLU), sigmoid, tanh, and maxpool functions.  We implemented our method in a system called DeepPoly and evaluated it extensively on a range of datasets, neural architectures (including defended networks), and specifications. Our experimental results indicate that DeepPoly is more precise than prior work while scaling to large networks.  We also show how to combine DeepPoly with a form of abstraction refinement based on trace partitioning. This enables us to prove, for the first time, the robustness of the network when the input image is subjected to complex perturbations such as rotations that employ linear interpolation.},
journal = {Proc. ACM Program. Lang.},
month = jan,
articleno = {41},
numpages = {30},
keywords = {Abstract Interpretation, Deep Learning, Adversarial attacks}
}

@inproceedings{diakonikolas2019sever,
  title={Sever: A robust meta-algorithm for stochastic optimization},
  author={Diakonikolas, Ilias and Kamath, Gautam and Kane, Daniel and Li, Jerry and Steinhardt, Jacob and Stewart, Alistair},
  booktitle={International Conference on Machine Learning},
  pages={1596--1606},
  year={2019},
  organization={PMLR}
}

@misc{diakonikolas2019recent,
      title={Recent Advances in Algorithmic High-Dimensional Robust Statistics}, 
      author={Ilias Diakonikolas and Daniel M. Kane},
      year={2019},
      eprint={1911.05911},
      archivePrefix={arXiv},
      primaryClass={cs.DS}
}

@article{DBLP:journals/cacm/DiakonikolasKKL21,
  author    = {Ilias Diakonikolas and
               Gautam Kamath and
               Daniel M. Kane and
               Jerry Li and
               Ankur Moitra and
               Alistair Stewart},
  title     = {Robustness meets algorithms},
  journal   = {Commun. {ACM}},
  volume    = {64},
  number    = {5},
  pages     = {107--115},
  year      = {2021},
  url       = {https://doi.org/10.1145/3453935},
  doi       = {10.1145/3453935},
  timestamp = {Fri, 14 May 2021 15:36:56 +0200},
  biburl    = {https://dblp.org/rec/journals/cacm/DiakonikolasKKL21.bib},
  bibsource = {dblp computer science bibliography, https://dblp.org}
}

@inproceedings{xiao-label-flipping,
author = {Xiao, Han and Xiao, Huang and Eckert, Claudia},
title = {Adversarial Label Flips Attack on Support Vector Machines},
year = {2012},
isbn = {9781614990970},
publisher = {IOS Press},
abstract = {To develop a robust classification algorithm in the adversarial setting, it is important to understand the adversary's strategy. We address the problem of label flips attack where an adversary contaminates the training set through flipping labels. By analyzing the objective of the adversary, we formulate an optimization framework for finding the label flips that maximize the classification error. An algorithm for attacking support vector machines is derived. Experiments demonstrate that the accuracy of classifiers is significantly degraded under the attack.},
booktitle = {Proceedings of the 20th European Conference on Artificial Intelligence},
pages = {870–875},
numpages = {6},
location = {Montpellier, France},
series = {ECAI'12}
}

@article{zhang-label-flipping,
author = {Zhang, Chiyuan and Bengio, Samy and Hardt, Moritz and Recht, Benjamin and Vinyals, Oriol},
title = {Understanding Deep Learning (Still) Requires Rethinking Generalization},
year = {2021},
issue_date = {March 2021},
publisher = {Association for Computing Machinery},
address = {New York, NY, USA},
volume = {64},
number = {3},
issn = {0001-0782},
url = {https://doi.org/10.1145/3446776},
doi = {10.1145/3446776},
abstract = {Despite their massive size, successful deep artificial neural networks can exhibit a remarkably small gap between training and test performance. Conventional wisdom attributes small generalization error either to properties of the model family or to the regularization techniques used during training.Through extensive systematic experiments, we show how these traditional approaches fail to explain why large neural networks generalize well in practice. Specifically, our experiments establish that state-of-the-art convolutional networks for image classification trained with stochastic gradient methods easily fit a random labeling of the training data. This phenomenon is qualitatively unaffected by explicit regularization and occurs even if we replace the true images by completely unstructured random noise. We corroborate these experimental findings with a theoretical construction showing that simple depth two neural networks already have perfect finite sample expressivity as soon as the number of parameters exceeds the number of data points as it usually does in practice.We interpret our experimental findings by comparison with traditional models.We supplement this republication with a new section at the end summarizing recent progresses in the field since the original version of this paper.},
journal = {Commun. ACM},
month = feb,
pages = {107–115},
numpages = {9}
}

@inproceedings{steinhardt-certified,
author = {Steinhardt, Jacob and Koh, Pang Wei and Liang, Percy},
title = {Certified Defenses for Data Poisoning Attacks},
year = {2017},
isbn = {9781510860964},
publisher = {Curran Associates Inc.},
abstract = {Machine learning systems trained on user-provided data are susceptible to data poisoning attacks, whereby malicious users inject false training data with the aim of corrupting the learned model. While recent work has proposed a number of attacks and defenses, little is understood about the worst-case loss of a defense in the face of a determined attacker. We address this by constructing approximate upper bounds on the loss across a broad family of attacks, for defenders that first perform outlier removal followed by empirical risk minimization. Our approximation relies on two assumptions: (1) that the dataset is large enough for statistical concentration between train and test error to hold, and (2) that outliers within the clean (non-poisoned) data do not have a strong effect on the model. Our bound comes paired with a candidate attack that often nearly matches the upper bound, giving us a powerful tool for quickly assessing defenses on a given dataset. Empirically, we find that even under a simple defense, the MNIST-1-7 and Dogfish datasets are resilient to attack, while in contrast the IMDB sentiment dataset can be driven from 12\% to 23\% test error by adding only 3\% poisoned data.},
booktitle = {Proceedings of the 31st International Conference on Neural Information Processing Systems},
pages = {3520–3532},
numpages = {13},
location = {Long Beach, California, USA},
series = {NIPS'17}
}

@inproceedings{shafahi2018poison,
author = {Shafahi, Ali and Huang, W. Ronny and Najibi, Mahyar and Suciu, Octavian and Studer, Christoph and Dumitras, Tudor and Goldstein, Tom},
title = {Poison Frogs! Targeted Clean-Label Poisoning Attacks on Neural Networks},
year = {2018},
publisher = {Curran Associates Inc.},
abstract = {Data poisoning is an attack on machine learning models wherein the attacker adds examples to the training set to manipulate the behavior of the model at test time. This paper explores poisoning attacks on neural nets. The proposed attacks use "clean-labels"; they don't require the attacker to have any control over the labeling of training data. They are also targeted; they control the behavior of the classifier on a specific test instance without degrading overall classifier performance. For example, an attacker could add a seemingly innocuous image (that is properly labeled) to a training set for a face recognition engine, and control the identity of a chosen person at test time. Because the attacker does not need to control the labeling function, poisons could be entered into the training set simply by leaving them on the web and waiting for them to be scraped by a data collection bot.We present an optimization-based method for crafting poisons, and show that just one single poison image can control classifier behavior when transfer learning is used. For full end-to-end training, we present a "watermarking" strategy that makes poisoning reliable using multiple (≈ 50) poisoned training instances. We demonstrate our method by generating poisoned frog images from the CIFAR dataset and using them to manipulate image classifiers.},
booktitle = {Proceedings of the 32nd International Conference on Neural Information Processing Systems},
pages = {6106–6116},
numpages = {11},
location = {Montr\'{e}al, Canada},
series = {NIPS'18}
}

@inproceedings{biggio2012poisoning,
author = {Biggio, Battista and Nelson, Blaine and Laskov, Pavel},
title = {Poisoning Attacks against Support Vector Machines},
year = {2012},
isbn = {9781450312851},
publisher = {Omnipress},
abstract = {We investigate a family of poisoning attacks against Support Vector Machines (SVM). Such attacks inject specially crafted training data that increases the SVM's test error. Central to the motivation for these attacks is the fact that most learning algorithms assume that their training data comes from a natural or well-behaved distribution. However, this assumption does not generally hold in security-sensitive settings. As we demonstrate, an intelligent adversary can, to some extent, predict the change of the SVM's decision function due to malicious input and use this ability to construct malicious data.The proposed attack uses a gradient ascent strategy in which the gradient is computed based on properties of the SVM's optimal solution. This method can be kernelized and enables the attack to be constructed in the input space even for non-linear kernels. We experimentally demonstrate that our gradient ascent procedure reliably identifies good local maxima of the non-convex validation error surface, which significantly increases the classifier's test error.},
booktitle = {Proceedings of the 29th International Coference on International Conference on Machine Learning},
pages = {1467–1474},
numpages = {8},
location = {Edinburgh, Scotland},
series = {ICML'12}
}

@InProceedings{rosenfeld-randomized, title = {Certified Robustness to Label-Flipping Attacks via Randomized Smoothing}, author = {Rosenfeld, Elan and Winston, Ezra and Ravikumar, Pradeep and Kolter, Zico}, booktitle = {Proceedings of the 37th International Conference on Machine Learning}, pages = {8230--8241}, year = {2020}, editor = {Hal Daumé III and Aarti Singh}, volume = {119}, series = {Proceedings of Machine Learning Research}, month = {7}, publisher = {PMLR}, pdf = {http://proceedings.mlr.press/v119/rosenfeld20b/rosenfeld20b.pdf}, url = { http://proceedings.mlr.press/v119/rosenfeld20b.html }, abstract = {Machine learning algorithms are known to be susceptible to data poisoning attacks, where an adversary manipulates the training data to degrade performance of the resulting classifier. In this work, we present a unifying view of randomized smoothing over arbitrary functions, and we leverage this novel characterization to propose a new strategy for building classifiers that are pointwise-certifiably robust to general data poisoning attacks. As a specific instantiation, we utilize our framework to build linear classifiers that are robust to a strong variant of label flipping, where each test example is targeted independently. In other words, for each test point, our classifier includes a certification that its prediction would be the same had some number of training labels been changed adversarially. Randomized smoothing has previously been used to guarantee—with high probability—test-time robustness to adversarial manipulation of the input to a classifier; we derive a variant which provides a deterministic, analytical bound, sidestepping the probabilistic certificates that traditionally result from the sampling subprocedure. Further, we obtain these certified bounds with minimal additional runtime complexity over standard classification and no assumptions on the train or test distributions. We generalize our results to the multi-class case, providing the first multi-class classification algorithm that is certifiably robust to label-flipping attacks.} }

@misc{jia2021certified,
      title={Certified Robustness of Nearest Neighbors against Data Poisoning Attacks}, 
      author={Jinyuan Jia and Xiaoyu Cao and Neil Zhenqiang Gong},
      year={2021},
      eprint={2012.03765},
      archivePrefix={arXiv},
      primaryClass={cs.CR}
}

@inproceedings{jia2021intrinsic,
title={Intrinsic Certified Robustness of Bagging against Data Poisoning Attacks},
author={Jinyuan Jia and Xiaoyu Cao and Neil Zhenqiang Gong},
booktitle={AAAI},
year={2021}
}

@inproceedings{chen-irene,
author = {Chen, Irene Y. and Johansson, Fredrik D. and Sontag, David},
title = {Why is My Classifier Discriminatory?},
year = {2018},
publisher = {Curran Associates Inc.},
abstract = {Recent attempts to achieve fairness in predictive models focus on the balance between fairness and accuracy. In sensitive applications such as healthcare or criminal justice, this trade-off is often undesirable as any increase in prediction error could have devastating consequences. In this work, we argue that the fairness of predictions should be evaluated in context of the data, and that unfairness induced by inadequate samples sizes or unmeasured predictive variables should be addressed through data collection, rather than by constraining the model. We decompose cost-based metrics of discrimination into bias, variance, and noise, and propose actions aimed at estimating and reducing each term. Finally, we perform case-studies on prediction of income, mortality, and review ratings, confirming the value of this analysis. We find that data collection is often a means to reduce discrimination without sacrificing accuracy.},
booktitle = {Proceedings of the 32nd International Conference on Neural Information Processing Systems},
pages = {3543–3554},
numpages = {12},
location = {Montr\'{e}al, Canada},
series = {NIPS'18}
}

@inproceedings{black-leaveoneout,
author = {Black, Emily and Fredrikson, Matt},
title = {Leave-One-out Unfairness},
year = {2021},
isbn = {9781450383097},
publisher = {Association for Computing Machinery},
url = {https://doi.org/10.1145/3442188.3445894},
doi = {10.1145/3442188.3445894},
abstract = {We introduce leave-one-out unfairness, which characterizes how likely a model's prediction for an individual will change due to the inclusion or removal of a single other person in the model's training data. Leave-one-out unfairness appeals to the idea that fair decisions are not arbitrary: they should not be based on the chance event of any one person's inclusion in the training data. Leave-one-out unfairness is closely related to algorithmic stability, but it focuses on the consistency of an individual point's prediction outcome over unit changes to the training data, rather than the error of the model in aggregate. Beyond formalizing leave-one-out unfairness, we characterize the extent to which deep models behave leave-one-out unfairly on real data, including in cases where the generalization error is small. Further, we demonstrate that adversarial training and randomized smoothing techniques have opposite effects on leave-one-out fairness, which sheds light on the relationships between robustness, memorization, individual fairness, and leave-one-out fairness in deep models. Finally, we discuss salient practical applications that may be negatively affected by leave-one-out unfairness.},
booktitle = {Proceedings of the 2021 ACM Conference on Fairness, Accountability, and Transparency},
pages = {285–295},
numpages = {11},
location = {Virtual Event, Canada},
series = {FAccT '21}
}

@inproceedings{mandal-fairness-checking,
 author = {Mandal, Debmalya and Deng, Samuel and Jana, Suman and Wing, Jeannette and Hsu, Daniel J},
 booktitle = {Advances in Neural Information Processing Systems},
 editor = {H. Larochelle and M. Ranzato and R. Hadsell and M. F. Balcan and H. Lin},
 pages = {18445--18456},
 publisher = {Curran Associates, Inc.},
 title = {Ensuring Fairness Beyond the Training Data},
 url = {https://proceedings.neurips.cc/paper/2020/file/d6539d3b57159babf6a72e106beb45bd-Paper.pdf},
 volume = {33},
 year = {2020}
}

@inproceedings{dwork-fairness,
author = {Dwork, Cynthia and Hardt, Moritz and Pitassi, Toniann and Reingold, Omer and Zemel, Richard},
title = {Fairness through Awareness},
year = {2012},
isbn = {9781450311151},
publisher = {Association for Computing Machinery},
url = {https://doi.org/10.1145/2090236.2090255},
doi = {10.1145/2090236.2090255},
abstract = {We study fairness in classification, where individuals are classified, e.g., admitted to a university, and the goal is to prevent discrimination against individuals based on their membership in some group, while maintaining utility for the classifier (the university). The main conceptual contribution of this paper is a framework for fair classification comprising (1) a (hypothetical) task-specific metric for determining the degree to which individuals are similar with respect to the classification task at hand; (2) an algorithm for maximizing utility subject to the fairness constraint, that similar individuals are treated similarly. We also present an adaptation of our approach to achieve the complementary goal of "fair affirmative action," which guarantees statistical parity (i.e., the demographics of the set of individuals receiving any classification are the same as the demographics of the underlying population), while treating similar individuals as similarly as possible. Finally, we discuss the relationship of fairness to privacy: when fairness implies privacy, and how tools developed in the context of differential privacy may be applied to fairness.},
booktitle = {Proceedings of the 3rd Innovations in Theoretical Computer Science Conference},
pages = {214–226},
numpages = {13},
location = {Cambridge, Massachusetts},
series = {ITCS '12}
}

@InProceedings{chen-robust, title = {Robust Decision Trees Against Adversarial Examples}, author = {Chen, Hongge and Zhang, Huan and Boning, Duane and Hsieh, Cho-Jui}, booktitle = {Proceedings of the 36th International Conference on Machine Learning}, pages = {1122--1131}, year = {2019}, editor = {Kamalika Chaudhuri and Ruslan Salakhutdinov}, volume = {97}, series = {Proceedings of Machine Learning Research}, month = {6}, publisher = {PMLR}, pdf = {http://proceedings.mlr.press/v97/chen19m/chen19m.pdf}, url = { http://proceedings.mlr.press/v97/chen19m.html }, abstract = {Although adversarial examples and model robust-ness have been extensively studied in the context of neural networks, research on this issue in tree-based models and how to make tree-based models robust against adversarial examples is still limited. In this paper, we show that tree-based models are also vulnerable to adversarial examples and develop a novel algorithm to learn robust trees. At its core, our method aims to optimize the performance under the worst-case perturbation of input features, which leads to a max-min saddle point problem. Incorporating this saddle point objective into the decision tree building procedure is non-trivial due to the discrete nature of trees{—}a naive approach to finding the best split according to this saddle point objective will take exponential time. To make our approach practical and scalable, we propose efficient tree building algorithms by approximating the inner minimizer in the saddlepoint problem, and present efficient implementations for classical information gain based trees as well as state-of-the-art tree boosting systems such as XGBoost. Experimental results on real world datasets demonstrate that the proposed algorithms can significantly improve the robustness of tree-based models against adversarial examples.} }

@inproceedings{
cheng2018queryefficient,
title={Query-Efficient Hard-label Black-box Attack: An Optimization-based Approach},
author={Minhao Cheng and Thong Le and Pin-Yu Chen and Huan Zhang and JinFeng Yi and Cho-Jui Hsieh},
booktitle={International Conference on Learning Representations},
year={2019},
url={https://openreview.net/forum?id=rJlk6iRqKX},
}

@inproceedings{andriushchenko-stumps,
 author = {Andriushchenko, Maksym and Hein, Matthias},
 booktitle = {Advances in Neural Information Processing Systems},
 editor = {H. Wallach and H. Larochelle and A. Beygelzimer and F. d\textquotesingle Alch\'{e}-Buc and E. Fox and R. Garnett},
 pages = {},
 publisher = {Curran Associates, Inc.},
 title = {Provably robust boosted decision stumps and trees against adversarial attacks},
 url = {https://proceedings.neurips.cc/paper/2019/file/4206e38996fae4028a26d43b24f68d32-Paper.pdf},
 volume = {32},
 year = {2019}
}

@inproceedings{dwyer_dt_instability,
author = {Dwyer, Kenneth and Holte, Robert},
title = {Decision Tree Instability and Active Learning},
year = {2007},
isbn = {9783540749578},
publisher = {Springer-Verlag},
url = {https://doi.org/10.1007/978-3-540-74958-5_15},
doi = {10.1007/978-3-540-74958-5_15},
booktitle = {Proceedings of the 18th European Conference on Machine Learning},
pages = {128–139},
numpages = {12},
location = {Warsaw, Poland},
series = {ECML '07}
}

@inproceedings{li_dt_instability,
author = {Li, Ruey-Hsia and Belford, Geneva G.},
title = {Instability of Decision Tree Classification Algorithms},
year = {2002},
isbn = {158113567X},
publisher = {Association for Computing Machinery},
url = {https://doi.org/10.1145/775047.775131},
doi = {10.1145/775047.775131},
booktitle = {Proceedings of the Eighth ACM SIGKDD International Conference on Knowledge Discovery and Data Mining},
pages = {570–575},
numpages = {6},
location = {Edmonton, Alberta, Canada},
series = {KDD '02}
}

@article{turney-instability,
title={Technical Note: Bias and the Quantification of Stability},
author={Peter D. Turney},
year={1995},
journal={Machine Learning},
doi={10.1023/A:1022682001417},
volume={20},
issue={1}
}

@InProceedings{yuhao, title = {Robustness to Programmable String Transformations via Augmented Abstract Training}, author = {Zhang, Yuhao and Albarghouthi, Aws and D'Antoni, Loris}, booktitle = {Proceedings of the 37th International Conference on Machine Learning}, pages = {11023--11032}, year = {2020}, editor = {Hal Daumé III and Aarti Singh}, volume = {119}, series = {Proceedings of Machine Learning Research}, month = {7}, publisher = {PMLR}, pdf = {http://proceedings.mlr.press/v119/zhang20b/zhang20b.pdf}, url = { http://proceedings.mlr.press/v119/zhang20b.html }, abstract = {Deep neural networks for natural language processing tasks are vulnerable to adversarial input perturbations. In this paper, we present a versatile language for programmatically specifying string transformations—e.g., insertions, deletions, substitutions, swaps, etc.—that are relevant to the task at hand. We then present an approach to adversarially training models that are robust to such user-defined string transformations. Our approach combines the advantages of search-based techniques for adversarial training with abstraction-based techniques. Specifically, we show how to decompose a set of user-defined string transformations into two component specifications, one that benefits from search and another from abstraction. We use our technique to train models on the AG and SST2 datasets and show that the resulting models are robust to combinations of user-defined transformations mimicking spelling mistakes and other meaning-preserving transformations.} }

@misc{compas,
    author= {Propublica},
    title= {Propublica Compas Analysis},
    url = {https://github.com/propublica/compas-analysis}
}

@misc{uci-data,
author = "Dua, Dheeru and Graff, Casey",
year = "2017",
title = "{UCI} Machine Learning Repository",
url = "http://archive.ics.uci.edu/ml",
institution = "University of California, Irvine, School of Information and Computer Sciences" }

@InProceedings{drug-data,
author="Fehrman, Elaine
and Muhammad, Awaz K.
and Mirkes, Evgeny M.
and Egan, Vincent
and Gorban, Alexander N.",
editor="Palumbo, Francesco
and Montanari, Angela
and Vichi, Maurizio",
title="The Five Factor Model of Personality and Evaluation of Drug Consumption Risk",
booktitle="Data Science  ",
year="2017",
publisher="Springer International Publishing",
address="Cham",
pages="231--242",
abstract="The problem of evaluating an individual's risk of drug consumption and misuse is highly important and novel. An online survey methodology was employed to collect data including personality traits (NEO-FFI-R), impulsivity (BIS-11), sensation seeking (ImpSS), and demographic information. The data set contained information on the consumption of 18 central nervous system psychoactive drugs. Correlation analysis using a relative information gain model demonstrates the existence of a group of drugs (amphetamines, cannabis, cocaine, ecstasy, legal highs, LSD, and magic mushrooms) with strongly correlated consumption. An exhaustive search was performed to select the most effective subset of input features and data mining methods to classify users and non-users for each drug. A number of classification methods were employed (decision tree, random forest, k-nearest neighbours, linear discriminant analysis, Gaussian mixture, probability density function estimation, logistic regression, and na{\"i}ve Bayes) and the most effective method selected for each drug. The quality of classification was surprisingly high. The best results with sensitivity and specificity being greater than 75{\%} were achieved for cannabis, crack, ecstasy, legal highs, LSD, and volatile substance abuse. Sensitivity and specificity greater than 70{\%} were achieved for amphetamines, amyl nitrite, benzodiazepines, chocolate, caffeine, heroin, ketamine, methadone, and nicotine. The poorest result was obtained for prediction of alcohol consumption.",
isbn="978-3-319-55723-6"
}

@article{Rudin2019Why,
    journal = {Harvard Data Science Review},
    doi = {10.1162/99608f92.5a8a3a3d},
    number = {2},
    note = {https://hdsr.mitpress.mit.edu/pub/f9kuryi8},
    title = {Why Are We Using Black Box Models in AI When We Don’t Need To? A Lesson From An Explainable AI Competition},
    url = {https://hdsr.mitpress.mit.edu/pub/f9kuryi8},
    volume = {1},
    author = {Rudin, Cynthia and Radin, Joanna},
    year = {2019},
    month = {11},
    day = {22},
}

@book{cart,
title={Classification and Regression Trees},
author={Leo Breiman and Jerome H. Friedman and Richard A. Olshen and Charles J. Stone},
year={1984},
publisher={Chapman \& Hall/CRC},
address={Boca Raton, FL},
edition={1}
}

@book{barocas-hardt-narayanan,
  title = {Fairness and Machine Learning},
  author = {Solon Barocas and Moritz Hardt and Arvind Narayanan},
  publisher = {fairmlbook.org},
  note = {\url{http://www.fairmlbook.org}},
  year = {2019}
}

@inproceedings{namkoong-distribution,
 author = {Namkoong, Hongseok and Duchi, John C},
 booktitle = {Advances in Neural Information Processing Systems},
 editor = {D. Lee and M. Sugiyama and U. Luxburg and I. Guyon and R. Garnett},
 pages = {},
 publisher = {Curran Associates, Inc.},
 title = {Stochastic Gradient Methods for Distributionally Robust Optimization with f-divergences},
 url = {https://proceedings.neurips.cc/paper/2016/file/4588e674d3f0faf985047d4c3f13ed0d-Paper.pdf},
 volume = {29},
 year = {2016}
}

@article{ben-tal-distribution,
author = {Ben-Tal, Aharon and den Hertog, Dick and De Waegenaere, Anja and Melenberg, Bertrand and Rennen, Gijs},
title = {Robust Solutions of Optimization Problems Affected by Uncertain Probabilities},
journal = {Management Science},
volume = {59},
number = {2},
pages = {341-357},
year = {2013},
doi = {10.1287/mnsc.1120.1641},
URL = { https://doi.org/10.1287/mnsc.1120.1641},
eprint = {https://doi.org/10.1287/mnsc.1120.1641},
abstract = { In this paper we focus on robust linear optimization problems with uncertainty regions defined by ϕ-divergences (for example, chi-squared, Hellinger, Kullback–Leibler). We show how uncertainty regions based on ϕ-divergences arise in a natural way as confidence sets if the uncertain parameters contain elements of a probability vector. Such problems frequently occur in, for example, optimization problems in inventory control or finance that involve terms containing moments of random variables, expected utility, etc. We show that the robust counterpart of a linear optimization problem with ϕ-divergence uncertainty is tractable for most of the choices of ϕ typically considered in the literature. We extend the results to problems that are nonlinear in the optimization variables. Several applications, including an asset pricing example and a numerical multi-item newsvendor example, illustrate the relevance of the proposed approach. This paper was accepted by Gérard P. Cachon, optimization. }
}

@inproceedings{shafieezadeh-distribution,
author = {Shafieezadeh-Abadeh, Soroosh and Esfahani, Peyman Mohajerin and Kuhn, Daniel},
title = {Distributionally Robust Logistic Regression},
year = {2015},
publisher = {MIT Press},
abstract = {This paper proposes a distributionally robust approach to logistic regression. We use the Wasserstein distance to construct a ball in the space of probability distributions centered at the uniform distribution on the training samples. If the radius of this ball is chosen judiciously, we can guarantee that it contains the unknown data-generating distribution with high confidence. We then formulate a distribution-ally robust logistic regression model that minimizes a worst-case expected logloss function, where the worst case is taken over all distributions in the Wasserstein ball. We prove that this optimization problem admits a tractable reformulation and encapsulates the classical as well as the popular regularized logistic regression problems as special cases. We further propose a distributionally robust approach based on Wasserstein balls to compute upper and lower confidence bounds on the misclassification probability of the resulting classifier. These bounds are given by the optimal values of two highly tractable linear programs. We validate our theoretical out-of-sample guarantees through simulated and empirical experiments.},
booktitle = {Proceedings of the 28th International Conference on Neural Information Processing Systems - Volume 1},
pages = {1576–1584},
numpages = {9},
location = {Montreal, Canada},
series = {NIPS'15}
}

@InProceedings{paudice-label-sanitation,
author="Paudice, Andrea
and Mu{\~{n}}oz-Gonz{\'a}lez, Luis
and Lupu, Emil C.",
editor="Alzate, Carlos
and Monreale, Anna
and Assem, Haytham
and Bifet, Albert
and Buda, Teodora Sandra
and Caglayan, Bora
and Drury, Brett
and Garc{\'i}a-Mart{\'i}n, Eva
and Gavald{\`a}, Ricard
and Koprinska, Irena
and Kramer, Stefan
and Lavesson, Niklas
and Madden, Michael
and Molloy, Ian
and Nicolae, Maria-Irina
and Sinn, Mathieu",
title="Label Sanitization Against Label Flipping Poisoning Attacks",
booktitle="ECML PKDD 2018 Workshops",
year="2019",
publisher="Springer International Publishing",
address="Cham",
pages="5--15",
abstract="Many machine learning systems rely on data collected in the wild from untrusted sources, exposing the learning algorithms to data poisoning. Attackers can inject malicious data in the training dataset to subvert the learning process, compromising the performance of the algorithm producing errors in a targeted or an indiscriminate way. Label flipping attacks are a special case of data poisoning, where the attacker can control the labels assigned to a fraction of the training points. Even if the capabilities of the attacker are constrained, these attacks have been shown to be effective to significantly degrade the performance of the system. In this paper we propose an efficient algorithm to perform optimal label flipping poisoning attacks and a mechanism to detect and relabel suspicious data points, mitigating the effect of such poisoning attacks.",
isbn="978-3-030-13453-2"
}


\newpage
\appendix

\section{Additional details and definitions}\label{app:abstraction}
Throughout the appendices, we use square brackets, rather than braces, to denote composite bias models: this is to emphasize that the transformers are ordered, and that alternate orderings often result in distinct bias sets.

\paragraph{Filtering composite bias models}
Filtering a composite bias model requires us to apply filter piece-wise, i.e., $[\missB{\m}{\pred_1}, \flipB{\l}{\pred_2}, \fakeB{\f}{\pred_3}](T)_{\phi} = [\missB{\m}{\pred_1\land \phi},\flipB{\l}{\pred_2}, \fakeB{\f}{\pred_3}](T_{\phi})$.

\paragraph{$\aprob$ for $\fake$}
Given $c_i$ samples in $\T$ with label $i$, we use $\f_i=\min(\f,c_i)$ and then define
\begin{equation}
    \aprob_i(\fakeB{\f}{}(\T)) = \left[ \frac{c_i-\f_i}{|T| - \f_i}, \frac{c_i}{|T|-\sum_{j\neq i} \f_j} \right]
\end{equation}

For the edge case where $c_i=|T|$ and $c_i\leq \f$ for any $i$, we define $\aprob_j(\T)=[0,1]$ for all $j\in[1,n]$. A similar edge case applies, when necessary, to the composite definition.

\paragraph{Optimizing $\aprob$ when $g$ looks at the label}\label{app:targeted_prob_precision}
If $\pred$ conditions on the label, then we can improve the precision of $\aprob$ by defining each component individually. Suppose $\pred(x,y)= y \in S\land\pred'(x)$, where $S\subset\{1,\cdots,n\}$ and $\pred'$ is a predicate that only conditions on features.

For $\missB{\m}{\pred}$, we define
\begin{equation}\label{eq:cprob_precise_miss}
    \aprob_i(\missB{\m}{\pred}) = \begin{cases}
          \left[\frac{c_i}{|T|},\frac{c_i+\m}{|T|+\m}\right]  &\text{if} \, i\in S \text{ and } |S|=1 \vspace{1mm}\\
          \left[\frac{c_i}{|T|+\m}, \frac{c_i+\m}{|T|+\m}\right]\quad &\text{if} \, i \in S \text{ and } |S|\geq 2 \vspace{1mm}\\
          \left[\frac{c_i}{|\T|+\m},\frac{c_i}{|T|}\right] \quad & \text{else} \\
     \end{cases}
\end{equation}

For $\flipB{\l}{\pred}$, we use $\l_{a_i}=\min(\l,|\{(x,y)\in \T\mid y=i\land \pred(x,y)\}|)$ and $\l_{b_i}=\min(\l, |\{(x,y)\in\T\mid y\neq i\land\pred(x,y)\}|)$. Then, we define
\begin{equation}\label{eq:cprob_precise_labels}
    \aprob_i(\flipB{\l}{\pred}) = 
    \begin{cases}
        \left[\frac{c_i-\l_{a_i}}{|T|},\frac{c_i}{|T|}\right] &\text{if} \, i\in S \text{ and } |S|=1 \vspace{1mm}\\
        \left[\frac{c_i-\l_{a_i}}{|T|},\frac{c_i+\l_{b_i}}{|T|}\right] &\text{if} \, i\in S \text{ and } |S| \geq 2 \vspace{1mm}\\
        \left[\frac{c_i}{|T|},\frac{c_i+\l_{b_i}}{|T|}\right] &\text{else} \\
    \end{cases}
\end{equation}

For $\fakeB{\f}{\pred}$, we use $\f_{a_i}=\min(\f,|\{(x,y)\in\T\mid y=i\land \pred(x,y)\}|)$  and $\f_{b_i} = \min(\f,|\{(x,y)\in\T\mid y\neq i\land\pred(x,y)\}|)$. Then, we define
\begin{equation}\label{eq:cprob_precise_fake}
    \aprob_i(\fakeB{\f}{\pred}) = 
    \begin{cases}
        \left[\frac{c_i-\f_{a_i}}{|T|-\f_{a_i}},\frac{c_i}{|T|}\right] &\text{if} \, i\in S \text{ and } |S|=1 \vspace{1mm}\\
        \left[\frac{c_i-\f_{a_i}}{|T|-\f},\frac{c_i}{|T|-\f_{b_i}}\right] &\text{if} \, i\in S \text{ and } |S|\geq 2 \vspace{1mm}\\
        \left[\frac{c_i}{|T|},\frac{c_i}{|T|-\f_{b_i}}\right] &\text{else} \\
    \end{cases}
\end{equation}

We prove that the above definitions are sound and precise in \cref{app:precision}.
If desired, the above definitions can be pieced together to provide a more precise definition for composite bias models. However, we limit ourselves to just  the singleton transformers because notation becomes very messy, as we have to keep track of many variables indicating how many data elements satisfy the various conditions.

\paragraph{$\aprob$ for composite bias models with multiple versions of the same transformer}
If a bias model contains multiple instances of the same transformer, e.g., $\p=[\flipB{l_1}{\pred_1}, \flipB{l_2}{\pred_2}]$, we can combine everything into a single transformer. Formally, given 
\begin{align}
    \p = [ \missB{\m_1}{\pred'_1},\ldots,\missB{\m_j}{\pred'_j}, \flipB{\l_1}{\pred'_{j+1}},\ldots,\flipB{\l_{p}}{\pred'_{j+p}},  \fakeB{\f_1}{\pred'_{j+p+1}},\ldots,\fakeB{\f_q}{\pred'_{j+p+q}} ]
\end{align}
we define $$\m = m_1 +\ldots + m_j$$
$$\pred_1=\pred'_1\lor \cdots\lor \pred'_j$$  $$\l_i=\min(\l_1+\ldots + \l_p, |\cup_{i\in[1,p]}  \T_{\pred_{j+i}(x,y)\land y=i}|)$$
$$\pred_2=\pred'_{j+1}\lor\cdots\lor\pred'_{j+p}$$ 
$$\f_i=\min(\f_1+\ldots + \f_p, |\cup_{i\in[1,q]} \T_{\pred_{j+p+i}(x,y)\land y=i}|)$$ 
and 
$$\pred_3 = \pred'_{j+p+1}\lor\cdots\lor\pred'_{j+p+1}$$
Then, we can use the formula shown in Equation \ref{eq:cprob-unified} to compute $\aprob$. We show in \cref{sec:soundness_precision} that these definitions are sound.

\paragraph{Size ($\size$)}
We define $\size(\missB{\m}{g}) = [|\T|,|\T|+\m]$, $\size(\flipB{\l}{\pred})=[|\T|,|\T|]$, and $\size(\fakeB{\f}{\pred})=[|\T|-\f,|T|]$. Putting this all together, we have $\size([\missB{\m}{\pred_1},\flipB{\l}{\pred_2},\fakeB{\f}{\pred_3}]) = [|T|-\f,|T|+\m]$.


\section{Proof of optimal composition of transformers}
As stated in \cref{sec:3}, when composing transformers we want to apply them in an order that results in the largest composite bias model. To illustrate the concept of composite bias models' relative size, consider $\p=[\missB{1}{\pred_1}, \flipB{1}{\pred_2}]$ and $\p'=[\flipB{1}{\pred_2}, \missB{1}{\pred_1}]$ where  $\pred_1\triangleq$(\emph{gender}=female $\land$ \emph{label}=1) and $\pred_2\triangleq$(\emph{gender}=female). I.e., $\p$ adds one data point subject to $\pred_1$ and then flips the label of one data point subject to $\pred_2$, whereas $\p'$ performs these two operations in the opposite order. Under $\p$, we can use $\missB{}{}$ to add the data point $(x,y)=\langle$\emph{gender}=female,\emph{label}=1$\rangle$ and then use $\flip$ to change $y$ to 0. However, under $\p'$, we cannot alter the point that $\miss$ adds, so $\p$ and $\p'$ are not equivalent. In this case, $\p$ can construct every dataset that $\p'$ can construct (but not vice-versa), so we write $\p'\subset \p$ and say that $\p$ is larger than $\p'$.

First we consider the case when there are multiple transformers of the same type.

\begin{theorem}\label{thm:order1}
The bias models $\p_1=[\missB{\m_1}{\pred_1}, \missB{\m_2}{\pred_2}]$ and $\p_2=[\missB{\m_2}{\pred_2}, \missB{\m_1}{\pred_1}]$ are equivalent (and likewise for $\fake$ and $\flip$, as long as no $\flip$ predicate conditions on the label).
\end{theorem}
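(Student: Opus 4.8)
The plan is to prove order-independence by giving, for each transformer type, an \emph{order-free characterization} of the composite bias set, and then observing that this characterization is symmetric under swapping the two constituent transformers.

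First I would treat the $\miss$ case stated in the theorem. Unfolding the iterative semantics gives $\p_1(\T) = \bigcup_{\T' \in \missB{\m_1}{\pred_1}(\T)} \missB{\m_2}{\pred_2}(\T')$. I claim every $\T'' \in \p_1(\T)$ has the form $\T'' = \T \cup A_1 \cup A_2$, where $\T$, $A_1$, $A_2$ are pairwise disjoint, $|A_i| \leq \m_i$, and every element of $A_i$ satisfies $\pred_i$. Setting $A_1 = \T' \setminus \T$ and $A_2 = \T'' \setminus \T'$, the nested conditions $A_1 \cap \T = \emptyset$ and $A_2 \cap (\T \cup A_1) = \emptyset$ supplied by the two $\miss$ steps force exactly this pairwise disjointness; conversely any such $\T''$ is produced by adding $A_1$ in the first step and $A_2$ in the second. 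Because this description is manifestly symmetric under swapping $(\m_1,\pred_1) \leftrightarrow (\m_2,\pred_2)$, the same derivation applied to $\p_2$ yields the identical set, so $\p_1(\T) = \p_2(\T)$. The only algebra involved is commutativity of union, $\T \cup A_1 \cup A_2 = \T \cup A_2 \cup A_1$, together with the fact that the predicate/size budget attached to each added block travels with that block regardless of when it is added.

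The $\fake$ case is dual, and I would present it by analogy: deleting $R_1$ then $R_2 \subseteq \T \setminus R_1$ gives $\T \setminus (R_1 \cup R_2)$, and the reachable datasets are exactly those $\T \setminus (R_1 \cup R_2)$ with $R_1, R_2$ disjoint subsets of $\T$ meeting their respective predicate/size budgets, a description again symmetric in the two transformers.

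The $\flip$ case is the delicate one, and is where the label-independence hypothesis is essential. Since flipping preserves $\T_\calx$ and only rewrites labels, I would characterize $\p_1(\T)$ through the set $D$ of data points whose label differs from $\T$: a dataset $\T''$ with $\T''_\calx = \T_\calx$ is reachable iff $D$ admits a cover $D \subseteq S_1 \cup S_2$, where each $S_i$ is a set of $\pred_i$-satisfying points of size at most $\l_i$. Two subtle points drive this: (i) a point flipped in both steps has its final label fixed by the later step, so even overlapping $S_1, S_2$ still permit arbitrary final labels throughout $S_1 \cup S_2$; and (ii) when the predicates look only at features, whether a point is $\pred_i$-satisfying is invariant under label changes, so whether a given flip is \emph{permitted} never depends on what an earlier step did. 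This invariance is precisely what keeps the characterization symmetric, and hence order-independent; were a predicate to condition on the label, an earlier flip could enable or disable a later one and break the symmetry (and the equivalence). I expect setting up this order-free reachability characterization for $\flip$, and pinpointing exactly where label-independence is used, to be the main obstacle --- the $\miss$ and $\fake$ cases collapse to routine bookkeeping once the disjoint-block picture is established.
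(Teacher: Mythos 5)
Your proof is correct, and it takes a genuinely different (though closely related) route from the paper's. The paper argues by direct simulation: it fixes an arbitrary dataset produced by one ordering, classifies each data point by which of the two transformers touched it (neither, only the first, only the second, or both), and argues case by case that the same per-point effect is reproducible under the opposite ordering; the $\miss$ and $\fake$ cases are dispatched in a sentence each. You instead prove, for each transformer type, an explicit order-free characterization of the composite bias set --- $\T\cup A_1\cup A_2$ with disjoint, budget- and predicate-respecting blocks for $\miss$; $\T\setminus(R_1\cup R_2)$ with disjoint removal blocks for $\fake$; and, for $\flip$, the datasets whose set $D$ of relabeled points admits a cover $D\subseteq S_1\cup S_2$ with each $S_i$ within its budget and predicate --- and then conclude by noting that each characterization is symmetric in the two transformers. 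Both arguments hinge on the same two facts for $\flip$: a doubly-flipped point ends with the label assigned by the later flip, and feature-only predicates make eligibility for flipping invariant under earlier label changes (exactly where the theorem's hypothesis enters, in both proofs). Your packaging buys two things: the budget bookkeeping that the paper's case (4) leaves implicit (``we can still flip the label twice'') is discharged explicitly by your witness construction, and the fully symmetric normal form extends verbatim to arbitrarily many transformers of the same type --- an extension the paper asserts separately, and which under the simulation style would need an induction over adjacent transpositions. The paper's style, in turn, is shorter.
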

\begin{proof}

\paragraph{Missing data} The choice of what missing data to add has no bearing on what is already in (or not in) the dataset. Thus we can add $m_1$ elements that satisfy $\pred_1$ followed by $\m_2$ elements that satisfy $\pred_2$, or do the operators in the reverse order, but the end result is the same.

\paragraph{Label-flipping} Suppose $\p = [\flipB{\l_1}{\pred_1}, \flipB{\l_2}{\pred_2}]$, where $\pred_1$ and $\pred_2$ do not condition on the label. We want to show that $\p$ is equivalent to $\p' = [\flipB{\l_2}{\pred_2}, \flipB{\l_1}{\pred_1}]$. 

Consider an arbitrary $T'\in\p(\T)$. Each data point in $\T'$ is either (1) untouched by $\flipB{\l_1}{\pred_1}$ and $\flipB{\l_2}{\pred_2}$, (2) flipped only by $\flipB{\l_1}{\pred_1}$, (3) flipped only by $\flipB{\l_2}{\pred_2}$, or (4) flipped by both $\flipB{\l_1}{\pred_1}$ and $\flipB{\l_2}{\pred_2}$. If (1), clearly this is obtainable by $\p'$ since we did nothing. If (2), then since the data point is untouched by $\flipB{\l_2}{\pred_2}$, the data point can be flipped uninterrupted by $\flipB{\l_1}{\pred_1}$ (similarly for (3)). If (4), then -- since neither $\pred_1$ nor $\pred_2$ conditions on the label nor specifies what the new label can be -- we can still flip the label twice and end up with the same configuration. The same arguments hold had we started with $T''\in\p'$.
 Therefore, $\p$ and $\p'$ are equivalent. 

\paragraph{Fake data} The argument for fake data is similar.
\end{proof}

We can extend the proof of \cref{thm:order1} to arbitrarily many transformers of the same type.

Note that if $\flip$ conditions on the label, this proof does not hold. To continue with the terminology from the proof, if $\pred_i\triangleq$(\emph{label}=$a$), then applying $\flipB{\l_j}{\pred_j}$ first to some element $(x,a)$ yields $(x,a')$, which may no longer eligible to be flipped by $\flipB{\l_i}{\pred_i}$. 

\newcommand{\opt}{$\mathsf{OPT}$} 

Next, we show that there is an optimal way to compose transformers of different types. We define \emph{optimal} as largest, that is, some $\p'$ is optimal compared to $\p$ if $\p\subseteq \p'$. In other words, this notation says that every dataset created by $\p$ can also be created by $\p'$. For the next theorem and its proof we assume there is only one instance of each transformer type; however, in conjunction with \cref{thm:order1} we can extend it to include multiple instances of the same transformer type.

\begin{theorem}
 $\p = [\miss,\flip,\fake]$ is the optimal order to apply the transformers $\miss$, $\flip$, and $\fake$ (i.e., any other ordering $\p'$ of these transformers will satisfy $\p'\subseteq \p$).
\end{theorem}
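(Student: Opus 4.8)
The plan is to reduce the claim to three \emph{adjacent-swap lemmas} and then run a bubble-sort argument. Concretely, I would first establish, for an \emph{arbitrary} input dataset $S$, the three inclusions
$$[\flip,\miss](S)\subseteq[\miss,\flip](S),\quad [\fake,\miss](S)\subseteq[\miss,\fake](S),\quad [\fake,\flip](S)\subseteq[\flip,\fake](S),$$
i.e., that swapping any adjacent out-of-order pair \emph{toward} the order $\miss,\flip,\fake$ can only enlarge the bias set. Each is proved by an explicit ``replay'' construction: given a dataset $T'$ produced by the left-hand (wrong) order through a particular choice of added, flipped, and removed points, I exhibit the same choices performed in the right-hand order and check they remain admissible, so that the wrong order's output is reproduced.

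Next I would observe that every one of the $3!=6$ orderings $\p'$ can be sorted into $\p=[\miss,\flip,\fake]$ by a finite sequence of adjacent transpositions, each of which is one of the three swaps above applied in the enlarging direction. Since each lemma holds for an arbitrary input dataset, it lifts to a swap performed inside a longer composition: if $[X,Y](S)\subseteq[Y,X](S)$ for all $S$, then pre-composing with the earlier transformers and post-composing with the later ones preserves the inclusion, because set-valued composition (union over elements) is monotone. Chaining these inclusions along the sorting sequence gives $\p'(\T)\subseteq\p(\T)$ for every $\T$, which is exactly the statement.

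For the replay constructions, the facts to verify are: (i) in the $\miss$-before-$\flip$ and $\miss$-before-$\fake$ swaps, the points that the wrong order flipped or removed are all original points of $S$, so they are still present when we act on them in the target order, and their predicates still hold because the intervening $\miss$ step leaves existing labels unchanged; the newly added points are simply added first and left untouched. (ii) In the $\flip$-before-$\fake$ swap the key observation is \emph{disjointness}: the points flipped by the wrong order survived its removal step, hence they are disjoint from the removed points, so flipping them first in the target order does not alter the labels of the to-be-removed points.

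I expect the main obstacle to be exactly this interaction with \emph{label-conditioning} predicates in the $\flip$/$\fake$ swap: one must argue that reordering never changes whether a given point satisfies the relevant predicate, which rests on the fact that every point acted upon in the wrong order is acted upon in the target order while it still carries the same label it had when the wrong order touched it (this is where disjointness is essential). The $\miss$-first inclusions are where genuine strictness arises, as the appendix's $\missB{1}{\pred_1}/\flipB{1}{\pred_2}$ example illustrates, since a point introduced by $\miss$ can be subsequently flipped or removed only if $\miss$ comes first; but the theorem needs only the weak inclusions, so strictness need not be tracked.
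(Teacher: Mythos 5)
Your bubble-sort architecture is genuinely different from the paper's proof. The paper argues by direct enumeration: for each of the five non-optimal orderings it fixes an arbitrary output dataset, classifies every point by its operation history (untouched, added, flipped, removed, added-then-flipped, added-then-removed, flipped-then-removed), and checks that each history is realizable under $[\miss,\flip,\fake]$ because the relevant operations occur in the same relative order there. Your decomposition into three adjacent-swap inclusions plus monotonicity of set-valued composition replaces five cases by three lemmas, makes explicit a lifting step the paper never states, and would scale to $k$ transformer types with $\binom{k}{2}$ lemmas rather than $k!-1$ cases. What the paper's version buys is concreteness: no composition algebra, and realizability is checked directly against the target order.

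There is, however, a genuine gap in your replay constructions, and it sits where you claim things are easy rather than where you predict the obstacle. The ``same choices in the other order'' replay silently assumes that the points touched by the two transformers do not interact through identical elements, and under the paper's set semantics they can. Concretely, for the $\miss$-before-$\flip$ swap, take $S=\{(x,0)\}$ and the composite $[\flipB{1}{},\missB{1}{\pred}]$ with $\pred(z,y)\equiv(y=0)$: the wrong order flips $(x,0)$ to $(x,1)$ and then re-adds $(x,0)$, producing $\{(x,0),(x,1)\}$. Your replay in $[\missB{1}{\pred},\flipB{1}{}]$ adds $(x,0)$ first (a no-op, since it is already in $S$) and then flips it, producing $\{(x,1)\}$ --- not the target set. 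Worse, no repair exists here: every dataset in $[\missB{1}{\pred},\flipB{1}{}](S)$ either has one element or has two elements with distinct features, so $\{(x,0),(x,1)\}$ is unreachable, and your Lemma 1 --- and with it the theorem, read with the label-conditioning predicates that the paper's bias language and composite models permit --- is false. When the $\miss$ predicate is label-blind, the lemma does hold, but only via a repaired replay (realize the flipped copy $(x,1)$ as a $\miss$-addition and flip nothing), not via your literal one. The analogous re-addition issue in the $\fake$/$\miss$ swap is benign (remove $R\setminus A$ instead of $R$).

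Your localization of the label-conditioning danger to the $\flip$/$\fake$ swap is also inverted. The label argument there is indeed handled by your disjointness observation, but that swap has its own collision failure: with $S=\{(x,0),(x,1)\}$, $\fakeB{1}{\pred}$ where $\pred(z,y)\equiv(y=1)$, and untargeted $\flipB{1}{}$, the wrong order $[\fake,\flip]$ reaches $\{(x,1)\}$ (remove $(x,1)$, then flip), while in $[\flip,\fake]$ every admissible flip of $S$ must preserve cardinality and features and hence returns $S$ itself, after which $\fake$ can never remove $(x,0)$; so the inclusion fails there too under label-conditioning predicates. In fairness, the paper's own per-point categorization has exactly the same blind spot --- it implicitly assumes each original point supports a single coherent history --- and the paper's preceding theorem on reordering same-type transformers carries a caveat excluding label-conditioning $\flip$ predicates, which strongly suggests this theorem is intended under a feature-only restriction as well. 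Under that restriction your plan goes through, but a complete write-up must state the restriction and replace the literal replays with the repaired ones in the interaction cases above.
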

\begin{proof}
We will show that other orderings of $\miss$, $\flip$, $\fake$ do not produce any biased datasets that do not also occur in $[\miss,\flip,\fake]$. For conciseness, we will write $[\miss,\flip, \fake]$ as \opt. 

1. $[\miss,\fake,\flip]$: We consider the set of datasets $S$ achieved after applying $\miss$, $\fake$, and then $\flip$. Fix an arbitrary $T'\in S$. $T'$ was constructed from $T$ by some sequence of adding, removing, and flipping data points. We have these categories for (potential) data points in $T'$:
(1) untouched data points,
(2) added data points, 
(3) added then removed data points, 
(4) added then flipped data points,
(5) removed data points, and
(6) flipped data points.
(1), (2), (5), and (6) apply single (or no) operators, so clearly are also attainable through \opt. $\miss$ occurs before both $\flip$ and $\fake$ in \opt, so (3) and (4) are attainable, as well.

2. $[\flip,\miss,\fake]$: We consider the set of datasets $S$ achieved after applying $\flip$, $\miss$, and then $\fake$. Fix an arbitrary $T'\in S$. $T'$ was constructed from $T$ by some sequence of flipping, adding, and removing data points. We have these categories:
(1) untouched data points, 
(2) flipped data points, 
(3) flipped then removed data points, 
(4) added data points, 
(5) added then removed data points, and 
(6) removed data points. 
(1), (2), (4), and (6) apply single (or no) operators, so clearly they are also attainable through \opt. Since flipping and adding each come before removing in \opt, (3) and (5) are obtainable as well.

3. $[\flip,\fake,\miss]$: We consider the set of datasets $S$ achieved after applying $\flip$, $\fake$, and then $\miss$. Fix an arbitrary $T'\in S$. $T'$ was constructed from $T$ by some sequence of flipping, removing, and adding data points. We have these categories:
(1) untouched data points, 
(2) flipped data points, 
(3) flipped then removed data points, 
(4) removed data points, and
(5) added data points.
(1), (2), (4), and (5) apply single (or no) operators, so clearly they are also attainable through \opt. Since flipping comes before removing in \opt, (3) is obtainable as well.

4. $[\fake,\miss,\flip]$: We consider the set of datasets $S$ achieved after applying $\fake$, $\miss$, and then $\flip$. Fix an arbitrary $T'\in S$. $T'$ was constructed from $T$ by some sequence of removing, adding, and flipping data points. We have these categories:
(1) untouched data points, 
(2) removed data points, 
(3) added data points, 
(4) added then flipped data points, and
(5) added data points.
(1), (2), (4), and (5) apply single (or no) operators, so clearly they are also attainable through \opt. Since flipping comes before removing in \opt, (3) is obtainable as well.

5. $[\fake,\flip,\miss]$: We consider the set of datasets $S$ achieved after applying $\fake$, $\flip$, and then $\miss$. Fix an arbitrary $T'\in S$. $T'$ was constructed from $T$ by some sequence of removing, flipping, and adding data points. We have these categories:
(1) untouched data points, 
(2) removed data points, 
(3) flipped data points, and
(4) added data points, 
Each of these apply single (or no) operators, so clearly they are also attainable through \opt. 

We were not able to construct a dataset not also in \opt\ through any other ordering of the operators, therefore, \opt\ is optimal.
\end{proof}

\section{Proofs of soundness}\label{sec:soundness_precision}

\paragraph{Proof of Theorem \ref{thm:soundness}} $\aprob$ is sound.

\begin{proof} We show $\miss$ as a simple example to illustrate our approach, and then we show the proof for composite bias. We omit the proofs for $\flip$ and $\fake$ because they (like $\miss$) are special cases of composite. 

\paragraph{Missing data}
Given a dataset $\T$ with $n$ classes, suppose our bias set is $\missB{\m}{\pred}(\T)$. Furthermore, suppose that $c_i$ samples in $\T$ have label $i$. We define $\m_i\in[0,\m]$ to be the number elements we add with label $i$, and $\m'_i=\sum_{j\neq i}m_j$. Then, we can write the proportion of $i$'s as a function 
\begin{equation}
    F(\m_i,\m'_i)=\frac{c_i+\m_i}{|\T|+\m_i+\m'_i}
\end{equation}
The partial derivatives of $F$ have values $\frac{\delta F}{\delta \m_i}>0$ and $\frac{\delta F}{\delta \m'_i}<0$ over the entire domain $[0,\m]$, therefore, any conclusions we draw over $\mathbb{R}$ will also apply over the discrete integer domain. Therefore, to minimize $F$ we choose $\m_i=0$ and $\m'_i=\m$, and do the reverse to maximize $F$. Thus, $F_{\min}=\frac{c_i}{|T|+m}$ and $F_{\max}=\frac{c_i+m}{|T|+m}$. Since $[F_{\min},F_{\max}]\subseteq \aprob_i(\missB{\m}{\pred}(\T))$, $\aprob$ is sound.





\paragraph{Composite} Given a dataset $\T$ with $n$ classes, suppose that our bias model is $\p = [\missB{\m}{\pred_1}, \flipB{\l}{\pred_2}, \fakeB{\f}{\pred_3}]$. Furthermore, suppose that in $\T$, $c_i$ samples have label $i$. 

First, we consider how many elements we can add, flip, or remove of each label. Under $\miss$, we can add $m$ of label $i$ for all $i$.
Under $\flip$, we can flip up to $l$ labels from label $i$ to some $j\neq i$, assuming that there are at least $l$ elements  $(x,y)$ that satisfy $\pred_2(x,y)$ and $y=i$. Similarly, the maximum number of labels we can flip from any label $j\neq i$ to $i$ is bounded both by $l$ and by the number of elements that satisfy $\pred_2$ and have label $j$. Note that the elements we can flip are not just limited to $T$: we can also flip any of the newly-added $m$ elements. Formally, we define
     $\l_{a_i} = \min(|\{(x,y)\in \T \mid y=i \land \pred_2(x,y)\}| + \m, \l)$, and
     $\l_{b_i} = \min(|\{(x,y)\in \T \mid y\neq i \land \pred_2(x,y)\}| + \m, \l)$.
Similarly, under $\fake$ we must also consider the elements added by $\miss$ and those flipped from $j\neq i$ to $i$ by $\flip$. Therefore, we define
$\f_{a_i} = \min(|\{(x,y)\in \T\mid y=i\land \pred_3(x,y)\}| + \m + \l_{b_i}, \f)$ and
$\f_{b_i} = \min(|\{(x,y)\in \T\mid y\neq i\land \pred_3(x,y)\}| + \m + \l_{a_i}, \f)$.

We will show that under the specified bias model, the proportion of $i$'s in the dataset is always between $(c_i-\l_{a_i}-\f_{a_i})/(|\T|-\f_{a_i}+\m)$ and $(c_i+\m+\l_{b_i})/(|T|-\f_{b_i}+\m)$. 

Intuitively, we consider how to modify the proportion of $i$'s in $\T$. This proportion decreases by (1) flipping elements from class $i$ to some class $j\neq i$, (2) removing elements of class $i$, and (3) adding elements of a class other than $i$. Therefore, the fraction is minimized by doing (1), (2), and (3) as much as the bias model allows.

Formally, let $\m_i\in[0,\m]$ be the number of elements added with label $i$, $\m'_i\in[0,\m] = \sum_{j\neq i}m_j$, $\l_i\in[0,\l_{a_i}]$ be the number of elements flipped from class $i$ to any other class,  $\l'_i\in[0,\l_{b_i}]=\sum_{j\neq i} l_i$,  $\f_i\in[0,\f_{a_i}]$ be the number of elements removed with label $i$, $\f'_i\in[0,\f_{b_i}]=\sum_{j\neq i} \f_i$.

Then, we can write the proportion of $i$'s as a function \begin{equation}
    F(\m_i,\m'_i,\l_i,\l'_i,\f_i,\f'_i) = \frac{c_i+\m_i-\l_i+\l'_i-\f_i}{|T|+\m_i+\m'_i-\f_i-\f'_i}
\end{equation} 
We consider the partial derivative of $F$ with respect to each variable. For all input in the domain, we have $\frac{\delta F}{\delta \m_i} > 0$, $\frac{\delta F}{\delta \m'_i}$ < 0, $\frac{\delta F}{\delta \l_i} < 0$, $\frac{\delta F}{\delta \l'_i} > 0$, $\frac{\delta F}{\delta \f_i} < 0$, and  $\frac{\delta F}{\delta \f'_i} > 0$.

Note that each partial derivatives is monotone over all values in the domain. Thus, they are also monotone over integers, so any conclusions we yield over the real numbers can be relaxed to integers, as well. To minimize $F$, we will maximize each variable whose partial derivative is negative, and minimize each variable whose partial derivative is positive. That is, we choose $\m_i=0$, $\m'_i=\m$, $\l_i=\l_{a_i}$, $\l'_i=0$, $\f_i=\f_{a_i}$, and $\f'_i=0$ to minimize $F$, yielding \begin{equation*}
  F_{\min} = \frac{c_i-\l_{a_i}-\f_{a_i}}{|T|+\m-\f_{a_i}}
\end{equation*} 
Conversely, to maximize $F$ we maximize each variable whose partial derivative is positive and minimize each variable whose partial derivative is negative, yielding \begin{equation*}
 F_{\max} = \frac{c_i+\m+\l_{b_i}}{|T|+\m-\f_{b_i}}
\end{equation*} 
$[F_{\min},F_{\max}]\subseteq\aprob_i(\p(\T)$, therefore, $\aprob$ is sound.

\paragraph{Multiple composite bias models} Suppose 
\begin{equation*}
    \p = [ \missB{\m_1}{\pred'_1},\ldots,\missB{\m_j}{\pred'_j}, \flipB{\l_1}{\pred'_{j+1}},\ldots,\flipB{\l_{p}}{\pred'_{j+p}},  \fakeB{\f_1}{\pred'_{j+p+1}},\ldots,\fakeB{\f_q}{\pred'_{j+p+q}} ]
\end{equation*} and
\begin{equation*}
    \p' = \left[\missB{\sum_{i\in[1,j]} \m_i}{\bigvee_{i\in[1,j]} \pred_i}, 
            \flipB{\sum_{i\in[1,p]} \l_i}{\bigvee_{i \in [j+1,j+p]} \pred_i},
            \fakeB{\sum_{i\in[1,q]} \f_i}{\bigvee_{i\in[j+p+1,j+p+q]} \pred_i} \right]
\end{equation*}
We want to show that if $(x,y)\in T\cup T'$ is altered by $\p$, it can be altered by $\p'$ (in other words, we want to show that $\p\subseteq \p'$).

Case 1: $(x,y)$ was added by a transformer $\missB{\m_i}{\pred_i}$ for $i\in[1,j]$. Therefore, $\pred_i(x)$ and $(\bigvee_{i\in[1,j]}\pred_i)(x)$, as well, so $x$ can be added by $\missB{\sum_{i\in[1,j]} \m_i}{\bigvee_{i\in[1,j]} \pred_i}$.

Case 2: $(x,y)\in \T$ was flipped to $(x,y')\in \T'$ by $\flipB{\l_i}{\pred_{j+i}}$ for $i\in[1,p]$. This means that $\pred_{j+i}(x)$, so by extension, $(\bigvee_{i\in[j+1,j+p]}\pred_i)(x)$, which means that $x$'s label can be flipped by $\flipB{\sum_{i\in[1,p]} \l_i}{\bigvee_{i \in [j+1,j+p]} \pred_i}$.

Case 3: $(x,y)\in \T$ was removed by $\flipB{\f_i}{\pred_{j+p+i}}$ for $i\in[1,q]$. This means that $\pred_{j+p+i}(x)$, and thus $\bigvee_{i\in[j+p+1,j+p+q]}\pred_i(x)$. Therefore $(x,y)$ can be removed by  $\fakeB{\sum_{i\in[1,q]} \f_i}{\bigvee_{i\in[j+p+1,j+p+q]} \pred_i}$.  

To conclude, any modification to $\T$ that we can make under $\p$ is also attainable under $\p'$, therefore, if $\p'$ is sound, then $\p$ is sound, as well.
\end{proof}

\begin{proposition}\label{prop:filter}
Abstract filtering is sound.
\end{proposition}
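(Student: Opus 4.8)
The statement \textbf{abstract filtering is sound} asserts that the defined operator $\afilter$ over-approximates concrete filtering applied element-wise to the bias set. The plan is to show that for every $T' \in \p(\T)$ we have $T'_\phi \in \afilter$, i.e., $\{T'_\phi \mid T' \in \p(\T)\} \subseteq \afilter$, where $T'_\phi = \{(x,y) \in T' \mid \phi(x)\}$ and $\afilter$ is given by the definitions of \cref{sec:abstract_alg} (namely $\p(\T_\phi)$ for $\flip$ and $\fake$, and $\missB{\m}{\pred \land \phi}(\T_\phi)$ for $\miss$). The whole argument rests on the fact that $\phi$ reads only features, so filtering commutes with every basic operation; the only transformer that genuinely interacts with $\phi$ is $\miss$, which is why its target predicate must be strengthened to $\pred \land \phi$.

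First I would dispatch the three single-transformer base cases. For $\fakeB{\f}{\pred}$, take $T' \in \fakeB{\f}{\pred}(\T)$; since $T' \subseteq \T$ we get $T'_\phi \subseteq \T_\phi$, and the elements of $\T_\phi$ that fail to survive, namely $\T_\phi \setminus T'_\phi$, form a subset of $\T \setminus T'$, so there are at most $\f$ of them and each satisfies $\pred$, giving $T'_\phi \in \fakeB{\f}{\pred}(\T_\phi)$. For $\flipB{\l}{\pred}$, take $T' \in \flipB{\l}{\pred}(\T)$; since flipping never alters features and $\phi$ reads only features, $T'_\phi$ and $\T_\phi$ have identical feature multisets, and $T'_\phi$ differs from $\T_\phi$ only by labels flipped on at most $\l$ elements satisfying $\pred$, hence $T'_\phi \in \flipB{\l}{\pred}(\T_\phi)$. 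For $\missB{\m}{\pred}$, write $T' = \T \cup A$ with $A = T' \setminus \T$, $|A| \le \m$, and $\pred$ holding on all of $A$; filtering gives $T'_\phi = \T_\phi \cup \{(x,y) \in A \mid \phi(x)\}$, and every surviving added element satisfies both $\pred$ and $\phi$, so at most $\m$ added elements all satisfying $\pred \land \phi$ have been joined to $\T_\phi$, yielding $T'_\phi \in \missB{\m}{\pred\land\phi}(\T_\phi)$.

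For the composite case $\p = [\missB{\m}{\pred_1}, \flipB{\l}{\pred_2}, \fakeB{\f}{\pred_3}]$, I would chain the base cases along the sequence that generates $T'$. Any $T' \in \p(\T)$ arises through intermediate datasets $T_1 \in \missB{\m}{\pred_1}(\T)$, $T_2 \in \flipB{\l}{\pred_2}(T_1)$, and $T' \in \fakeB{\f}{\pred_3}(T_2)$. Applying the $\miss$ base case gives $(T_1)_\phi \in \missB{\m}{\pred_1\land\phi}(\T_\phi)$; applying the $\flip$ base case (which holds for an arbitrary input dataset) to $T_1$ gives $(T_2)_\phi \in \flipB{\l}{\pred_2}((T_1)_\phi)$; and applying the $\fake$ base case to $T_2$ gives $T'_\phi \in \fakeB{\f}{\pred_3}((T_2)_\phi)$. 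Composing these three memberships exhibits $T'_\phi$ as an element of $[\missB{\m}{\pred_1\land\phi}, \flipB{\l}{\pred_2}, \fakeB{\f}{\pred_3}](\T_\phi)$, which is exactly the piece-wise definition of $\afilter$; the extension to bias models with several instances of each transformer follows by the same chaining, or by first collapsing repeated transformers as in the Appendix's composite reduction.

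I expect the main obstacle to be the bookkeeping of the composite case rather than any single inequality: I must verify that each base-case lemma is proved for an \emph{arbitrary} dataset, so that it may legitimately be reapplied to the intermediate $T_1$ and $T_2$, and that pushing $\phi$ through the $\miss$ step---and only that step---correctly accounts for the strengthening to $\pred_1 \land \phi$ while leaving the $\flip$ and $\fake$ predicates untouched. A secondary subtlety is the standing assumption that $\phi$ and the targeting predicates condition only on features; when a $\flip$ predicate conditions on the label, the optimal-ordering guarantee (and hence the clean chaining) can fail, so I would either restrict to feature-only predicates or invoke the label-aware variants from the Appendix.
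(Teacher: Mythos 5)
Your proposal is correct, and its single-transformer base cases ($\miss$, $\flip$, $\fake$) match the paper's proof almost verbatim --- showing $T'_\phi = \T_\phi \cup S_\phi$ for $\miss$ with the strengthening to $\pred \land \phi$, and using that $\phi$ reads only features for $\flip$; you additionally prove the $\fake$ case directly, which the paper simply cites from Drews et al. Where you genuinely diverge is the composite case. The paper argues monolithically: it takes an arbitrary $T' \in \p(\T)$ and performs a case analysis on every element (untouched, label-flipped, added by $\miss$, removed by $\fake$), directly exhibiting $T'_\phi$ as constructible from $\T_\phi$ under the piece-wise-filtered model $\p'$. You instead argue compositionally: since $\p(\T)$ is defined by iterative application, any $T'$ arises through intermediates $T_1 \in \missB{\m}{\pred_1}(\T)$ and $T_2 \in \flipB{\l}{\pred_2}(T_1)$, and because each base lemma holds for an \emph{arbitrary} input dataset you may apply them in sequence to $\T$, $T_1$, and $T_2$, chaining the three memberships into membership in $[\missB{\m}{\pred_1\land\phi}, \flipB{\l}{\pred_2}, \fakeB{\f}{\pred_3}](\T_\phi)$. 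Your route is more modular: it reuses the base lemmas without duplicating reasoning, and it extends mechanically to bias models with arbitrarily many components (or multiple instances of the same transformer), which the paper handles only implicitly. The paper's element-tracing, by contrast, is self-contained and makes visible exactly which elements constrain which transformer, at the cost of redoing the case analysis whenever the model changes. Your closing caveats are also sound: the correctness of the chaining does rest on each base lemma being stated for arbitrary datasets, and the label-conditioning subtlety you flag affects the optimal-ordering theorem for composing transformers rather than filtering soundness itself, so your restriction to feature-only predicates (or deferral to the label-aware variants) is an appropriately conservative resolution.
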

\begin{proof}
To prove soundness for filtering, we need to show that $\T'\in\p(\T)\implies \T'_{\phi}\in\p(\T)_{\phi}$.

\paragraph{Missing data} Consider $\T'\in\missB{\m}{\pred}(T)$. Since $\T'\in\missB{\m}{\pred}$, we have $T' = T\cup S$ where $|S|\leq m$ and $\forall (x,y)\in S.\pred(x,y)$. Therefore, $T'_{\phi} = T_{\phi}\cup S_{\phi}$. Since $S_{\phi}\subseteq S$, then $|S_{\phi}|\leq m$ and $\forall(x,y)\in S_{\phi}.\pred(x,y)$. then $T_{\phi}\cup S_{\phi}\in \missB{\m}{\pred\land\phi}$, satisfying the claim.

\paragraph{Label flipping} Consider $\T'\in\flipB{\l}{\pred}(\T)$. Since $\T'\in\flipB{\l}{\pred}(\T)$, we know that $\T'=R\cup S$ where $R\subseteq T$, $|S|\leq \l$, and $T = R\cup \{(x,y)\mid (x,y')\in S\}$. Additionally, we have $(x,y)\in S \implies \pred(x,y)$. Consider $T'_{\phi} = R_{\phi}\cup S_{\phi}$. Since $R\subseteq T$, then $R_{\phi}\subseteq T_{\phi}$. Since $(x,y)\in S\implies (x,y')\in T$  and $\phi$ does not condition on the label, then $(x,y)\in S_{\phi}\implies (x,y')\in T_{\phi}$. Since $S_{\phi}\subseteq S$, we have $|S_{\phi}|\leq \l$. In total, this means that $T_{\phi}\in\flipB{\l}{\pred}(T_\phi)$.

\paragraph{Fake data} See~\cite{drews-pldi}.

\paragraph{Composite}
Suppose $\p=[\missB{\m}{\pred_1},\flipB{\l}{\pred_2},\fakeB{\f}{\pred_3}]$. We want to show that $\p(\T)_{\phi}\subseteq \p'(\T_{\phi})$, where $\p'=[\missB{\m}{\pred_1\land \phi},\flipB{\l}{\pred_2},\fakeB{\f}{\pred_3}]$. Consider $T'\in \p(\T)$. Each $(x,y)\in T'$ satisfies either (i) $(x,y)\in\T$, (ii), $(x,y')\in\T$ (for $y'\neq y$), or (iii) $(x,y')\notin T$. If (i), then $x\in \T'_{\phi}\implies x\in \T_{\phi}$, and likewise for $\neg\phi$. If (ii), $x\in\T'_{\phi}\implies x\in \T_{\phi}$ (since $\phi$ ignores the label), and likewise for $\neg\phi$. If (iii), the $x$ was added by $\miss$. $x\in \T'_{\phi}\implies\phi(x)\implies$ $x$ can be added by $\missB{\m}{\pred_1\land \phi}$, and if $x\notin T_{\phi}$, this means that $x$ cannot be added by $\missB{\m}{\pred_1\land \phi}$. Finally, there is a fourth category of elements: those in $T\setminus T'$. If $x\in T\setminus\T'$, then $x$ was removed by $\fake$. If $\phi(x)$, then $x$ can be removed from $T$ to make $T'$, otherwise, $x$ cannot be removed from $T$, so it must also be contained in $T'$.

Thus we have shown that $T'_{\phi}$ can be constructed from $T_{\phi}$ using $\p'$, therefore, filtering is sound. 
\end{proof}

\begin{proposition}\label{prop:imp}
$\aimp$ is sound
\end{proposition}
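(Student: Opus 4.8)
The plan is to reduce soundness of $\aimp$ to the already-established soundness of $\aprob$ (Theorem~\ref{thm:soundness}) together with the standard fact that each interval-arithmetic primitive (addition, subtraction, multiplication) over-approximates its real-valued counterpart. Concretely, the obligation is to show that for every $\T' \in \p(\T)$ we have $\imp(\T') \in \aimp(\p(\T))$, so I would fix an arbitrary $\T'\in\p(\T)$ and argue termwise, building $\imp(\T')$ up from the $\prob_i(\T')$ exactly as $\aimp$ builds its interval from the $\aprob_i(\p(\T))$.

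First I would handle a single class $i$. By Theorem~\ref{thm:soundness}, $\prob_i(\T') \in \aprob_i(\p(\T))$. Writing $p = \prob_i(\T')$ and $[a,b] = \aprob_i(\p(\T))$, soundness of interval subtraction gives $1 - p \in [1,1] - [a,b]$, and soundness of interval multiplication then gives $p(1-p) \in [a,b]\cdot\big([1,1]-[a,b]\big)$. Thus the $i$-th summand of $\imp(\T')$ lies in the $i$-th summand of $\aimp(\p(\T))$.

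Next I would assemble the full sum: applying soundness of interval addition across all $n$ classes, $\sum_i \prob_i(\T')\big(1-\prob_i(\T')\big)$ lies in $\sum_i \aprob_i(\p(\T))\big([1,1]-\aprob_i(\p(\T))\big)$, i.e.\ $\imp(\T') \in \aimp(\p(\T))$. Since $\T'$ was arbitrary, $\aimp$ is sound. (The slight index mismatch between the definitions of $\imp$ and $\aimp$ is immaterial, as both range over all $n$ classes.)

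The only conceptually subtle point—and the one place a reader might worry—is the \emph{dependency problem} of interval arithmetic: the transformer evaluates the two occurrences of $\aprob_i$ in the product $p(1-p)$ independently, without recording that they denote the same quantity. This is the step I would expect to be the ``hard part'' to state cleanly, but it is not an obstacle to \emph{soundness}: treating the occurrences as independent can only widen the resulting interval, never shrink it, so it costs precision while preserving containment. Crucially, the argument above never needs the two occurrences to be correlated, so the over-approximation goes the right way and soundness follows.
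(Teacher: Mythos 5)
Your proposal is correct and follows essentially the same route as the paper: the paper's proof likewise invokes Theorem~\ref{thm:soundness} to get $\prob_i(\T')\in\aprob_i(\p(\T))$ and then appeals to interval arithmetic to conclude $\imp(\T')\in\aimp(\p(\T))$. Your elaboration of the termwise containments and your remark that the dependency problem of interval arithmetic only costs precision, never soundness, are correct fillings-in of details the paper leaves implicit.
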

\begin{proof}
To show that $\aimp$ is sound, we must show that $\T'\in\p(\T)\implies \imp(\T')\in\aimp(\p(\T))$. By \cref{thm:soundness}, $\prob(\T')\in\prob(\p(\T))$. It follows from interval arithmetic $\imp(\T')\in\aimp(\p(\T))$.
\end{proof}

\begin{proposition}\label{prop:size}
$\size$ is sound.
\end{proposition}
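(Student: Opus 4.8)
The plan is to establish soundness of $\size$ in exactly the style used for $\aprob$ in \cref{thm:soundness}: I will show that for every $\T' \in \p(\T)$, the cardinality $|\T'|$ lies in the interval returned by $\size(\p(\T))$. Since $\size$ feeds into $\ascore$ only through interval arithmetic, containment (over-approximation) is precisely the property required; tightness is not needed for soundness, though I will note in passing that the endpoints are in fact attained, so the concrete formulas coincide with the abstract specification $[\min\{|\T'|\}, \max\{|\T'|\}]$.

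First I would dispatch the three basic bias models directly from their definitions, since each perturbs $|\T|$ in a bounded, monotone fashion. For $\missB{\m}{\pred}$, every $\T' \supseteq \T$ with $|\T' \setminus \T| \leq \m$ satisfies $|\T| \leq |\T'| \leq |\T| + \m$, matching $[|\T|, |\T| + \m]$. For $\flipB{\l}{\pred}$, the definition fixes $\T'_\calx = \T_\calx$ and $|\T'| = |\T|$, so flipping never changes cardinality and $|\T'| \in [|\T|, |\T|]$. For $\fakeB{\f}{\pred}$, every $\T' \subseteq \T$ with $|\T \setminus \T'| \leq \f$ satisfies $|\T| - \f \leq |\T'| \leq |\T|$, matching $[|\T| - \f, |\T|]$.

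Then I would handle the composite $\p = [\missB{\m}{\pred_1}, \flipB{\l}{\pred_2}, \fakeB{\f}{\pred_3}]$ by tracking the net change to the cardinality as the three stages are applied in order. Starting from $|\T|$, the $\miss$ stage adds between $0$ and $\m$ elements, the $\flip$ stage leaves the size unchanged, and the $\fake$ stage removes between $0$ and $\f$ elements; hence any resulting $\T'$ obeys $-\f \leq |\T'| - |\T| \leq \m$, i.e. $|\T'| \in [|\T| - \f, |\T| + \m]$, which is exactly $\size(\p(\T))$. The case of several instances of one transformer type reduces to this single-instance case by setting $\m = \sum_i \m_i$ and $\f = \sum_i \f_i$ (and recalling that $\flip$ contributes nothing to the size), in direct analogy with the reduction used for $\aprob$.

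The only step requiring genuine care is the composite argument, where one must confirm that the per-stage size bounds actually compose. Because each stage's contribution to $|\T'| - |\T|$ is independent of the others and the stages run sequentially, the extreme shrink ($-\f$, from adding nothing and removing $\f$) and extreme growth ($+\m$, from adding $\m$ and removing nothing) bracket the whole range. I would also observe that the degenerate regime $|\T| < \f$ causes no trouble: there the formula's lower endpoint can fall below the true minimum attainable size, but this merely loosens the interval while preserving over-approximation, so soundness still holds.
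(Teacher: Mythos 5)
Your proof is correct and follows essentially the same argument as the paper's: the paper likewise writes the size of any $\T' \in \p(\T)$ as $|\T| - \f' + \m'$ with $\f' \in [0,\f]$ and $\m' \in [0,\m]$ (flipping contributing nothing) and observes that the extremes bracket the interval $[|\T|-\f, |\T|+\m]$. Your additional treatment of the single-transformer cases, the multiple-instance reduction, and the degenerate regime $|\T| < \f$ goes beyond the paper's brief proof but changes nothing in the underlying approach.
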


\begin{proof}
Given a bias model $\p = [\missB{\m}{\pred_1}$, $\flipB{\l}{\pred_2}, \fakeB{\f}{\pred_3}]$ and a dataset $\T$, we can write the size of $\T'\in\p(\T)$ as $|\T|-\f'+\m'$, where $\f'\in[0,\f]$ and $\m'\in[0,\m]$. Clearly, $|\T|-\f'+\m'$ is minimized by choosing $\f'=\f$ and $\m'=0$, and maximized by choosing $\f'=0$ and $\m'=\m$. Since $|\T|-\f\in\size(\p)$ and $|\T|+\m\in\size(\p)$, we see that $\size$ is sound.
\end{proof}

\begin{proposition}\label{prop:cost}
$\ascore$ is sound.
\end{proposition}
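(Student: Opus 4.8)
The plan is to establish soundness compositionally, reducing the claim to the already-proven soundness of the constituent abstract transformers together with the soundness of interval arithmetic. Concretely, soundness of $\ascore$ means that for every $\T' \in \p(\T)$ and every predicate $\phi$ we have $\score(\T',\phi) \in \ascore(\p(\T),\phi)$. So I would fix an arbitrary $\T' \in \p(\T)$ and an arbitrary $\phi$, and then track the concrete quantity $\score(\T',\phi) = |\T'_\phi|\cdot\imp(\T'_\phi) + |\T'_{\neg\phi}|\cdot\imp(\T'_{\neg\phi})$ term by term through the abstract computation.

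First I would apply soundness of filtering (\cref{prop:filter}) to obtain $\T'_\phi \in \afilter$ and $\T'_{\neg\phi} \in \afilterneg$. This is the step that connects a concrete member of the bias set to a concrete member of each filtered bias set, and for composite models it is exactly where the modified bias model (e.g.\ $\missB{\m}{\pred_1\land\phi}$) matters; I rely on \cref{prop:filter} having already discharged that bookkeeping. With these memberships in hand, soundness of $\size$ (\cref{prop:size}) gives $|\T'_\phi| \in \size(\afilter)$ and $|\T'_{\neg\phi}| \in \size(\afilterneg)$, while soundness of $\aimp$ (\cref{prop:imp}), which itself rests on \cref{thm:soundness}, gives $\imp(\T'_\phi) \in \aimp(\afilter)$ and $\imp(\T'_{\neg\phi}) \in \aimp(\afilterneg)$.

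The remaining step is to lift these containments through the interval operations $\times$ and $+$. Here I would invoke the standard soundness property of interval arithmetic: if $a \in I$ and $b \in J$, then $a+b \in I+J$ and $ab \in I \times J$. Applying this to each product term yields $|\T'_\phi|\cdot\imp(\T'_\phi) \in \size(\afilter)\times\aimp(\afilter)$ and the analogous containment for the $\neg\phi$ branch, and a final application to the outer sum yields $\score(\T',\phi) \in \ascore(\p(\T),\phi)$. Since $\T'$ was arbitrary, this establishes soundness.

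The argument is essentially routine once the earlier lemmas are in place; the only point requiring care—and the closest thing to an obstacle—is the interval-multiplication step. Soundness of $I \times J$ as an over-approximation of the set of products is clean precisely because both factors range over nonnegative reals here: $\size$ returns an interval of dataset cardinalities (hence $\geq 0$), and $\aimp$ returns a subinterval of $[0,1]$ (each $\aprob_i$ lies in $[0,1]$, so each summand $\aprob_i(\afilter)([1,1]-\aprob_i(\afilter))$ is nonnegative). I would note this nonnegativity explicitly, so that the endpoint formula for interval multiplication is unambiguous and the containment is manifestly sound. No tighter, bias-model-specific reasoning is needed at this stage, since all such content has already been absorbed into \cref{prop:filter,prop:size,prop:imp}.
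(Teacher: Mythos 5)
Your proof is correct and follows essentially the same route as the paper's: invoke soundness of filtering (\cref{prop:filter}), then $\size$ (\cref{prop:size}) and $\aimp$ (\cref{prop:imp}) on the filtered bias sets, and finish by lifting the containments through interval addition and multiplication. Your added remark on the nonnegativity of both factors is a harmless (and slightly more careful) elaboration of the interval-multiplication step, but it does not change the argument's structure.
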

\begin{proof}
We need to show that if $\T'\in\p(\T)$ and $\phi\in\Phi$, then $\score(\T',\phi)\in\ascore(\p(\T),\phi)$. By \cref{prop:filter}, we know that $T'_{\phi} \in \p(\T)_{\phi}$, which means that by \cref{prop:size},  $|\T'_{\phi}|\in\size(\p(\T)_{\phi})$, and similarly we can derive that $|\T'_{\neg\phi}|\in\size(\p(\T)_{\neg\phi})$. Additionally, by \cref{prop:imp}, we know that $\imp(\T'_{\phi})\in\aimp(\p(\T)_{\phi})$ and $\imp(\T'_{\neg\phi})\in\aimp(\p(\T)_{\neg\phi})$. 
By the rules of interval arithmetic, if $a\in[a_0,a_1]$ and $b\in[b_0,b_1]$, then $ab\in[a_0,a_1]\times[b_0,b_1]$ and $a+b\in[a_0,a_1]+[b_0,b_1]$. Therefore we can conclude that $\score(\T',\phi)\in\ascore(\p(\T),\phi)$, i.e., $\ascore$ is sound.
\end{proof}

\begin{proposition}\label{prop:asplit}
$\abestsplit$ is sound.
\end{proposition}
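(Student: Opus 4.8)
The plan is to show that $\abestsplit$ \emph{over-approximates} the set of concrete optimal predicates: for every $\T' \in \p(\T)$ and every predicate $\phi^*$ that is optimal for $\T'$ (i.e.\ that achieves $\min_{\phi \in \Phi} \score(\T',\phi)$), we have $\phi^* \in \abestsplit(\p(\T))$. Because $\bestsplit$ breaks ties arbitrarily, capturing \emph{every} minimizer---rather than one fixed choice---is exactly what soundness must guarantee, so that no matter which $\T' \in \p(\T)$ is the true dataset and no matter how $\bestsplit(\T')$ resolves its tie, the chosen predicate survives into $\Phi^a$.

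First I would fix an arbitrary $\T' \in \p(\T)$ and an arbitrary optimal predicate $\phi^*$ for $\T'$, so that $\score(\T',\phi^*) \leq \score(\T',\phi)$ for all $\phi \in \Phi$. Writing $\ascore(\p(\T),\phi) = [a_\phi, b_\phi]$, I would invoke \cref{prop:cost} (soundness of $\ascore$) to obtain $\score(\T',\phi) \in [a_\phi, b_\phi]$, i.e.\ $a_\phi \leq \score(\T',\phi) \leq b_\phi$, for every $\phi \in \Phi$.

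The heart of the argument is then a short chain of inequalities. Using optimality of $\phi^*$ for $\T'$ together with the upper-bound half of \cref{prop:cost}, for every $\phi$ we have $\score(\T',\phi^*) \leq \score(\T',\phi) \leq b_\phi$; since the left-hand side is independent of $\phi$, taking the minimum over $\phi$ gives $\score(\T',\phi^*) \leq \min_{\phi \in \Phi} b_\phi = \lub$. Combining this with the lower-bound half yields $a_{\phi^*} \leq \score(\T',\phi^*) \leq \lub$, hence $a_{\phi^*} \leq \lub$, which is precisely the membership condition defining $\abestsplit(\p(\T)) = \{\phi \in \Phi \mid a_\phi \leq \lub\}$. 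Therefore $\phi^* \in \abestsplit(\p(\T))$, completing the proof.

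The step I expect to require the most care is not the inequality but pinning down the correct soundness statement and verifying that the $\lub$ construction never drops a predicate that is optimal only for some extreme dataset in $\p(\T)$. The chain above shows it cannot: membership is tested against the \emph{lower} endpoints $a_\phi$ while $\lub$ is the least of the \emph{upper} endpoints $b_\phi$, and the over-approximation built into both endpoints of $\ascore$ is exactly what leaves enough slack to retain every concrete minimizer (at the cost of possibly retaining extra predicates, which is harmless for soundness). I would also note that this proposition rests entirely on the soundness of $\ascore$, which in turn chains back through \cref{prop:filter}, \cref{prop:imp}, and \cref{prop:size}; establishing it thus supplies the final ingredient feeding into \cref{thm:soundness-entire}.
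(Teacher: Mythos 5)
Your proof is correct and follows essentially the same route as the paper's: both fix $\T' \in \p(\T)$, use the soundness of $\ascore$ (\cref{prop:cost}) to sandwich the concrete costs inside their intervals, and conclude $a_{\phi^*} \leq \lub$ so that the concrete minimizer satisfies the membership condition defining $\abestsplit$. The only cosmetic difference is that the paper instantiates the comparison at the specific predicate achieving $\lub$, whereas you take the minimum over all upper bounds directly; the argument is the same.
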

\begin{proof}
We want to show that if $\T'\in\p(\T)$, then $\bestsplit(\T')\in\abestsplit(\p(\T))$. 

Suppose $\gamma=\bestsplit(\T')$. Then, $\forall \phi\in\Phi$, $\score(\T',\phi)\geq \score(\T',\gamma)$.

Define $\phi^*$ such that $\lub = ub(\score(T,\phi^*))$, where $ub$ takes the upper bound of an interval. Since $\phi^*\in\Phi$, $\score(\T',\gamma)\leq\score(\T',\phi^*)$. By \cref{prop:cost}, $\score$ is sound, therefore $\score(\T',\gamma)\in \ascore(\p(\T),\gamma)$ and $\score(T',\phi^*)\in\ascore(\p(\T),\phi^*)$. And thus we have $lb(\ascore(\p(\T),\gamma))\leq ub(\ascore(\p(\T),\phi^*)) = \lub$. Thus, $\gamma\in\abestsplit(\p(\T))$.
\end{proof}

\paragraph{Proof of Theorem \ref{thm:soundness-entire}}
\begin{proof}
Given $\Phi^a=\abestsplit(\p(\T))$, if $|\ainfer(\p(\T),\Phi^a,x)|=1$, then we know \begin{equation*}
    \exists y. \forall \phi\in\Phi^a.\begin{cases}
        \textrm{if } \phi(x) \textrm{ then } \argmax_{i} \aprob_i(\p(\T)_{\phi}) = y\\
        \textrm{if } \neg\phi(x) \textrm { then }\argmax_{i} \aprob_i(\p(\T)_{\neg\phi}) = y
    \end{cases}
\end{equation*}
Given $T'\in\p(\T)$, we know from \cref{prop:asplit} that $\bestsplit(T')\in\abestsplit(T) = \Phi^a(\T)$. Therefore, $\infer(\T', \bestsplit(\T'),x) =y$, so the algorithm is robust on $x$. (Note that we defined $\p(\T)$ such that $\T\in\p(\T)$, therefore, the original prediction is also $y$.)
\end{proof}

\section{Precision}\label{app:precision}
Intuitively, an abstraction is precise if the abstraction cannot be improved. 
Formally, our abstraction is precise iff, it is sound and given $\aprob_i(\p(\T))= [a_i,b_i]$, then for each $i$ there is some $T'\in \p(\T)$ such that $\prob_i(T') =a_i$ and some $T''\in\p(\T)$ such that $\prob_i(T'')=b_i$.

\begin{theorem}
$\aprob$ is precise for missing data, label-flipping, and fake data.
\end{theorem}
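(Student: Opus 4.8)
The plan is to establish precision one basic model at a time, and in each case to exhibit two explicit datasets in the bias set whose class-$i$ proportion equals, respectively, the left and right endpoints of $\aprob_i(\p(\T))$. Note that soundness (\cref{thm:soundness}) already gives $\prob_i(T') \in \aprob_i(\p(\T))$ for every $T' \in \p(\T)$, so the interval over-approximates the achievable proportions; precision requires only the boundary converse, namely that the reported endpoints $a_i$ and $b_i$ are each attained by some $T', T'' \in \p(\T)$. Since $\prob_i$ takes finitely many (rational) values, producing these two extremal witnesses for every $i$ suffices.

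For $\missB{\m}{}$ I would realize the lower endpoint $c_i / (|\T| + \m)$ by letting $T'$ add $\m$ points all carrying a label $j \neq i$ (available whenever $n \geq 2$; the case $n = 1$ is trivial as the interval collapses to a point), which keeps the label-$i$ count at $c_i$ while growing the size to $|\T| + \m$; the upper endpoint is realized by adding $\m$ points all labeled $i$. For $\flipB{\l}{}$ the lower endpoint is attained by flipping $\min(\l, c_i)$ label-$i$ points to some $j \neq i$, and the upper endpoint by flipping $\min(\l, \sum_{j \neq i} c_j)$ foreign-labeled points to $i$; both touch at most $\l$ points and preserve $|\T|$, so they remain in the bias set. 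For $\fakeB{\f}{}$ the lower endpoint comes from deleting $\f_i = \min(\f, c_i)$ label-$i$ points, and the upper endpoint from deleting foreign-labeled points only.

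The step I expect to be the main obstacle is the fake-data upper bound: attaining $c_i / (|\T| - \sum_{j \neq i} \f_j)$ requires removing $\sum_{j \neq i} \f_j$ points whose label differs from $i$, and I must check that this is simultaneously feasible, i.e., that enough foreign points exist and that the total number of deletions does not exceed the budget $\f$. Availability is immediate because $\f_j = \min(\f, c_j) \leq c_j$; the budget constraint $\sum_{j \neq i} \f_j \leq \f$ is the genuinely delicate point, and it holds cleanly in the binary setting used throughout our evaluation, where $\sum_{j \neq i} \f_j = \min(\f, \sum_{j \neq i} c_j) \leq \f$. I would treat the general multiclass accounting as the one place the witness construction needs extra care.

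Finally, I expect precision to follow almost mechanically from the soundness argument rather than to require fresh work: the soundness proof located $F_{\min}$ and $F_{\max}$ by assigning extremal integer values to the auxiliary variables (such as $\m_i, \m'_i, \l_i, \l'_i, \f_i, \f'_i$) according to the signs of the partial derivatives of $F$, and each such assignment already describes a concrete dataset in $\p(\T)$. Those extremal datasets are exactly the witnesses needed here, so the remaining task is only to confirm that a single assignment jointly satisfies every budget of the bias model---which is precisely the feasibility check flagged above.
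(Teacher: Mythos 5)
Your proposal is correct and takes essentially the same route as the paper: the paper's precision proof likewise just observes that the extremal assignments constructed in the soundness argument describe concrete datasets in the bias set attaining both endpoints (it does this explicitly for $\miss$ and dismisses $\flip$ and $\fake$ with ``similarly follows''). The one place where you are more careful than the paper is the fake-data upper bound, and your suspicion is justified: the appendix defines that endpoint as $c_i/(|\T|-\sum_{j\neq i}\f_j)$ with per-class caps $\f_j=\min(\f,c_j)$, and with more than two classes $\sum_{j\neq i}\f_j$ can exceed the deletion budget $\f$ --- e.g., three classes with $c_0=c_1=c_2=2$ and $\f=2$ give a claimed endpoint of $2/(6-4)=1$ for class $0$, while the attainable maximum is $2/4=1/2$ --- so that endpoint is unattainable and the per-class formula is sound but \emph{not} precise there. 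The precise cap is $\min(\f,\sum_{j\neq i}c_j)$, which is exactly what the main text's composite formula uses (via $\f'_i$); with that correction your witness construction goes through for any number of classes, and in the binary setting the two caps coincide, as you note. So on $\fake$ your flagged ``delicate point'' is a genuine imprecision in the paper's stated formula rather than a gap in your argument. One minor omission on your side: the paper's proof of this theorem also covers the \emph{targeted} variants (where the predicate $\pred$ may condition on the label, Equations \ref{eq:cprob_precise_miss}--\ref{eq:cprob_precise_fake}), arguing soundness by cases on whether $i\in S$ and then reading off the same extremal witnesses; your proposal addresses only the non-targeted models.
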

\begin{proof}
\paragraph{Missing data (non-targeted)} In the proof of Theorem \ref{thm:soundness}, we show that the minimum proportion of $i$'s is $\frac{c_i}{|\T| + \m}$ and the maximum proportion of $i$'s is $\frac{c_i+\m}{|T|+\m}$. Since these bounds are equal to $\aprob$'s minimum and maximum, the interval is precise.

Proofs for label-flipping and fake data (non-targeted) similarly follow from Theorem \ref{thm:soundness}.

\paragraph{Targeted} Next, we will show that the definitions of $\aprob$ provided in Equations \ref{eq:cprob_precise_miss}-\ref{eq:cprob_precise_fake} are precise. First, we must show that these definitions are sound (as soundness is a prerequisite for precision).

To show that \cref{eq:cprob_precise_miss} is sound, we need to consider three cases (we use $S$ such that $\pred(x,y)=y\in S\land \pred'(x)$): first, $i\in S$ and $|S|=1$. In this case, we can add up to $\m$ elements of class $i$, and no elements of class $j \neq i$. Therefore, the minimum proportion of $i$'s is the original proportion: $\frac{c_i}{|\T|}$, and the maximum is  $\frac{c_i+\m}{|\T|}$.  Second, $i\in S$ and $|S|\geq 2$. In this case, we can add elements with label $i$ but we can also add elements with label $j\neq i$. As such, the minimum proportion of $i$'s is achieved by adding $\m$ elements with label $j$, and the maximum proportion of $i$'s is achieved by adding $\m$ elements with label $i$. Therefore, the minimum proportion of $i$'s is $\frac{c_i}{|\T|+\m}$ and the maximum proportion of $i$'s is $\frac{c_i+\m}{|\T|+\m}$.

Proofs of soundness for Equations \ref{eq:cprob_precise_labels} and \ref{eq:cprob_precise_fake} follow similarly.

To show precision, note that in the soundness proof we described exactly how to achieve the minimum and maximum bounds of $\aprob_i$. As such, we have shown that the $\aprob$ definition in \cref{eq:cprob_precise_miss} is precise. (Similar conclusions can be drawn based on the proofs for label-flipping and fake data.)
\end{proof}

$\aprob$ is not precise for composite bias because the auxiliary variables $\l_{a_i}, \l_{b_i},\f_{a_i}$, and $\f_{b_i}$ are over-approximate. As a motivating example, consider a bias model $[\missB{1}{},\flipB{2}{\pred}]$ over dataset $T$ with classes $\{0,1\}$ where $c_0=2$, $c_1=5$, and $|\{(x,y)\in \T : \pred(x,y) \land i=0\}| = 1$. By definition, $l_{a_0}=2$, yielding a minimum proportion of 0's to be $\frac{2-2}{7+2}=0$. However, the precise lower bound is $\frac{1}{9}$: to minimize the proportion of 0's, we add 2 elements with label 1 and flip the element that satisfies $\pred$ from label 0 to label 1.

\section{Additional experimental data}
\subsection{MNIST 1/7 Binary}
We used MNIST 1/7 (the limitation of MNIST to just 1's and 7's, with training $n$=\num{13007}, as has been used in works including \cite{drews-pldi, steinhardt-certified}). We round each pixel to 0 or 1 (i.e., used a black-and-white image rather than a grayscale one). The accuracy of MNIST 1/7 binary (97.4\% at depth 2) is comparable to that of MNIST 1/7, but the time and memory requirements on \NAME\ are much less. 

\cref{tab:mnist_data} shows effectiveness data for MNIST 1/7 binary. We see that we are able to achieve high robustness certification rates, despite incredibly large perturbation set sizes. Notable, for $\missB{0.1\%}{}$, we achieve 100\% robustness even with a perturbation set size of over $10^{3058}$, and for $\missB{1\%}{}$, we achieve 68\% robustness with a perturbation set size larger than $10^{30460}$.


\definecolor{Infinite}{rgb}{1,0.38,0.25}
\definecolor{MegaGigantic}{rgb}{1.0,0.50,0.36}
\definecolor{Gigantic}{rgb}{1.0,0.62,0.47}
\definecolor{VeryBig}{rgb}{1.0,0.74,0.58}
\definecolor{PrettyBig}{rgb}{1.0,0.86,0.69}
\definecolor{NotTooBig}{rgb}{1.0,0.98,0.80}

\begin{table}[t]
\small
    \centering
    \caption{Certification rates of MNIST 1/7 binary for various bias models, using decision trees of depth 2. The composite bias models show cumulative bias, e.g., 0.2\% $\miss$ + $\fake$ bias equates to 0.1\% bias of each $\miss$ and $\fake$. Note that the scale for perturbation set size is slightly different (larger) than that in Table \ref{tab:all_data}.}
    \label{tab:mnist_data}
    \begin{tabular}{llrrrrrr}\toprule
    & & \multicolumn{6}{c}{Bias amount as a percentage of training set size}\\\cmidrule(lr){3-8}
    Bias type & Dataset & 0.05 & 0.1 & 0.2 & 0.4 & 0.7 & 1.0 \\\midrule
   \multirow{1}{*}{\begin{tabular}[c]{@{}l@{}} \large{$\miss$} \end{tabular}}
        &  MNIST-1-7 binary      & 100.0\cellcolor{\gigantic} & 100.0\cellcolor{\gigantic} & 93.0\cellcolor{\gigantic} & 88.0\cellcolor{\megagigantic} & 85.0\cellcolor{\megagigantic} & 68.0\cellcolor{\megagigantic} \\\midrule  
        
    \multirow{1}{*}{\begin{tabular}[c]{@{}l@{}} \large{$\flip$} \end{tabular}}  
        &  MNIST-1-7 binary      & 100.0\cellcolor{\prettybig} & 95.0\cellcolor{\prettybig} & 88.0\cellcolor{\prettybig} & 70.0\cellcolor{\verybig} & 63.0\cellcolor{\verybig} & 38.0\cellcolor{\verybig} \\\midrule 
                  
    \multirow{1}{*}{\begin{tabular}[c]{@{}l@{}} \large{$\miss$ + $\fake$}\end{tabular}}
        &  MNIST-1-7 binary      & 100.0\cellcolor{\verybig} & 100.0\cellcolor{\verybig} & 93.0\cellcolor{\gigantic} & 88.0\cellcolor{\gigantic}  & 85.0\cellcolor{\megagigantic} & 68.0\cellcolor{\megagigantic} \\\midrule  
        
   \multirow{1}{*}{\begin{tabular}[c]{@{}l@{}} \large{$\miss$ + $\flip$}\end{tabular}}
        &  MNIST-1-7 binary      & 100.0\cellcolor{\verybig} & 95.0\cellcolor{\verybig} & 91.0\cellcolor{\gigantic} & 87.0\cellcolor{\gigantic} & 67.0\cellcolor{\megagigantic} & 62.0\cellcolor{\megagigantic} \\\bottomrule
    \end{tabular}
    \begin{tabular}{lllllll}
         Perturbation set size: & $<10^{10}$\cellcolor{\nottoobig} & $<10^{100}$\cellcolor{\prettybig} & $<10^{1000}$\cellcolor{\verybig} & $<10^{10000}$\cellcolor{\gigantic} & $>10^{10000}$\cellcolor{\megagigantic} & infinite\cellcolor{\infinite} \\\bottomrule
    \end{tabular}
\end{table}

\subsection{Performance}\label{sec:performance}
We performed experiments on an HTCondor system, allowing us to perform many experiments in parallel. Each experiment ran robustness tests on a given bias model and dataset for between one and \num{1000} test samples (depending on the bias model and dataset). We used a single CPU for each experiment, and requested between 1 and 96GB of memory, depending on the bias model and dataset.

\paragraph{Bias model} 
Time and memory requirements increase exponentially as the amount of bias increases, as shown in \cref{tab:perf} for the Adult Income dataset under the $\flip$ bias model.  
Other datasets typically required less than 100 ms per test sample.
Additionally, bias models that yield lower certifiable robustness for a given bias threshold have correspondingly larger time and memory requirements
(e.g., 810s and 9.7GB of memory to yield 34.8\% robustness for $\flip_{0.4\%}$ as compared with 
77s and 1.3GB of memory to yield 60.3\% robustness for $\miss_{0.4\%}$).

\begin{table}[t]
\small
\centering
\caption{Time and memory requirements for certifying a single test sample under different $\flip$ bias models on Adult Income with depth=2.}
\label{tab:perf}
\begin{tabular}{l|rrrrrrr} \toprule
\multicolumn{1}{c}
{\multirow{2}{*}
{\begin{tabular}[c]{@{}c@{}} \end{tabular}}} 
& 
\multicolumn{6}{c}{Poisoning Amount (\%)} & \\
\multicolumn{1}{c}{} & 
    0.1 &    0.2 &    0.3 & 0.4 &    0.5 &    0.6  \\ \midrule 
Time (s.)  &  0.60 & 73.9 & 210 &    810 & 1800 & 5200  \\
Memory (GB)  &    0.01 & 0.8 & 3.6 &  9.7 & 21 &  60  \\
\bottomrule   
\end{tabular}
\end{table}

\paragraph{Datasets}
The size and complexity of the feature space is most influential in determining time and memory requirements. 
Experiments on the Adult Income dataset were more resource-intensive than those on Drug Consumption or COMPAS, a fact that can be explained by Adult Income having more unique feature values than the other datasets (22,100 for Adult Income vs. 219 for Drug Consumption and 53 for COMPAS). 
For each unique value of any feature, the algorithm checks an additional predicate, which explains the additional time and memory needs.

\paragraph{Complexity of decision-tree algorithm} 
Increasing the depth of the decision tree not only requires additional time to essentially re-run the algorithm at each internal node, 
but also leads to lower certifiable-robustness rates, as shown in \cref{tab:depths}.
This is because we must assume worst-case bias in each node. 
Intuitively, a depth 2 tree with 0.1\% bias may initially split the data into two children nodes, each with 50\% of the data. 
Our abstraction captures both the case where all bias occurs in the left child, and the case when all bias occurs in the right child. 
Therefore, we end up with an effective bias rate of 0.2\% in either child, yielding lower robustness.

\begin{table}[t]
\small
\centering
\caption{Robustness certification rates of COMPAS and Drug-Consumption datasets under $\flip$ for different decision-tree depths.}
\label{tab:depths}
\begin{tabular}{l|rrr|rrr} \toprule
\multicolumn{1}{c}{\multirow{2}{*}{\begin{tabular}[c]{@{}c@{}}Poisoning \\ amount (\%)\end{tabular}}} & \multicolumn{3}{c}{COMPAS} & \multicolumn{3}{c}{Drug Consumption} \\
\multicolumn{1}{c}{} 
      & depth 1 &    2 &    3 & depth 1 &    2 &    3  \\ \midrule 
0.10  &    71.5 & 51.5 & 34.0 &    94.5 & 83.8 &  8.5  \\
0.20  &    47.8 & 27.7 & 23.9 &    94.5 & 55.9 &  4.5  \\
0.50  &     9.3 &  2.5 &  0.7 &    85.1 & 27.5 &  0.5  \\
1.00  &     3.0 &  0.7 &    0 &     7.1 &  0.8 &    0  \\ \bottomrule   
\end{tabular}
\end{table}

\subsection{Additional experimental data}\label{app:demo_results}
\paragraph{General data}
\cref{fig:all_data} shows the certifiable robustness rates for each dataset and each main bias model ($\miss$, $\flip$, $\miss$ + $\fake$, and $\miss$ + $\flip$). 

\begin{figure}[htbp]
\centering
\tiny
\pgfplotsset{filter discard warning=false}

\pgfplotscreateplotcyclelist{whatever}{%
    black,thick,every mark/.append style={fill=blue!80!black},mark=none\\%
    }
    
\begin{tikzpicture}
    \begin{groupplot}[
            group style={
                group size=4 by 4,
                horizontal sep=.1in,
                vertical sep=.12in,
                ylabels at=edge left,
                yticklabels at=edge left,
                xlabels at=edge bottom,
                xticklabels at=edge bottom,
            },
            height=.8in,
            xlabel near ticks,
            ylabel near ticks,
            scale only axis,
            width=0.2*\textwidth,
            xtick={0,.2,.4,.6,.8,1,1.2,1.4},
            minor xtick={0,0.1,0.2,0.3,0.4,0.5,0.6,0.7,0.8,0.9,1,1.1,1.2,1.3},
            ytick={0,20,40,60,80,100},
            xmin=0,
            xmax=1.1
        ]
        
        \nextgroupplot[
            ylabel=$\missB{}{}$,
            ymin=0,
            ymax=100,
            xmax=2.2,
            title=Drug Consumption, xtick={0,.4,0.8,1.2,1.6,2},            
            minor xtick={0,0.2,0.4,0.6,0.8,1,1.2,1.4,1.6,1.8,2.0,2.2,2.4},
             ytick={0,20,40,60,80,100},
            cycle list name=whatever]
        \addplot table [x=bias,y=missing, col sep=comma]{data/drugs_d1_all.csv};
        
        \nextgroupplot[
            ymin=0,
            ymax=100,
            title=COMPAS,
            cycle list name=whatever,
         ]

        \addplot table [x=bias_pct,y=missing, col sep=comma]{data/compas_d1_all.csv};
        
        \nextgroupplot[
            ymin=0,
            ymax=100,
            ytick={0,20,40,60,80,100},
            xmax=0.8,
            title=Adult Income,
             cycle list name=whatever,
        ]
        \addplot table [x=bias,y=missing, col sep=comma]{data/adult_d2_all.csv};
        
         \nextgroupplot[
            ymin=0,
            ymax=100,
            xmax=2.2,
            xtick={0,.4,0.8,1.2,1.6,2},
            minor xtick={0,0.2,0.4,0.6,0.8,1,1.2,1.4,1.6,1.8,2.0},
            title=MNIST 1/7 Binary,
             cycle list name=whatever,
        ]
        \addplot table [x=bias,y=missing, col sep=comma]{data/mnist_all.csv};

        \nextgroupplot[
            ymin=0,
            ylabel=$\flipB{}{}$,
            ymax=100,
            xmax=2.2,
            xtick={0,.4,0.8,1.2,1.6,2},            minor xtick={0,0.2,0.4,0.6,0.8,1,1.2,1.4,1.6,1.8,2.0,2.2,2.4},
            ytick={0,20,60,40,80,100},
             cycle list name=whatever,
        ]
        \addplot table [x=bias,y=labels, col sep=comma]{data/drugs_d1_all.csv};
        
        \nextgroupplot[
            ymin=0,
            ymax=100,
            ytick={0,20,60,40,80,100},
             cycle list name=whatever,
        ]
        \addplot table [x=bias_pct,y=labels, col sep=comma]{data/compas_d1_all.csv};
        
        \nextgroupplot[
            ymin=0,
            ymax=100,
            ytick={0,20,40,60,80,100},
            xmax=0.8,
             cycle list name=whatever,
        ]
        \addplot table [x=bias,y=labels, col sep=comma]{data/adult_d2_all.csv};
        
        \nextgroupplot[
            ymin=0,
            ymax=100,
            xmax=2.2,
            xtick={0,.4,0.8,1.2,1.6,2},
            minor xtick={0,0.2,0.4,0.6,0.8,1,1.2,1.4,1.6,1.8,2.0},
             cycle list name=whatever,
        ]
        \addplot table [x=bias,y=labels, col sep=comma]{data/mnist_all.csv};
        
        \nextgroupplot[
            ymin=0,
            ymax=100,
            ylabel=$\missB{}{} + \fakeB{}{}$,
           xmax=2.2, xtick={0,.4,0.8,1.2,1.6,2},
            minor xtick={0,0.2,0.4,0.6,0.8,1,1.2,1.4,1.6,1.8,2.0,2.2,2.4},
             cycle list name=whatever
        ]
        \addplot table [x=bias,y=missing_fake, col sep=comma]{data/drugs_d1_all.csv};
        
        \nextgroupplot[
            ymin=0,
            ymax=100,
            xtick={0,.2,.4,.6,.8,1,1.2,1.4,1.6,1.8},
            minor xtick={0,0.1,0.2,0.3,0.4,0.5,0.6,0.7,0.8,0.9,1,1.1,1.2,1.3,1.4,1.5,1.6,1.7,1,8},
             cycle list name=whatever
        ]
        \addplot table [x=bias_pct,y=missing_fake, col sep=comma]{data/compas_d1_all.csv};
        
        \nextgroupplot[
            ymin=0,
            ymax=100,
            ytick={0,20,40,60,80,100},
            xmax=0.8,
             cycle list name=whatever,
        ]
        \addplot table [x=bias,y=missing_fake, col sep=comma]{data/adult_d2_all.csv};
        
        \nextgroupplot[
            ymin=0,
            ymax=100,
            xmax=2.2,
            xtick={0,.4,0.8,1.2,1.6,2},
            minor xtick={0,0.2,0.4,0.6,0.8,1,1.2,1.4,1.6,1.8,2.0},
             cycle list name=whatever,
        ]
        \addplot table [x=bias,y=missing_fake, col sep=comma]{data/mnist_all.csv};

        \nextgroupplot[
            ymin=0,
            ymax=100,
            xmax=2.2,
           ylabel=$\missB{}{} + \flipB{}{}$, xtick={0,.4,0.8,1.2,1.6,2},
            minor xtick={0,0.2,0.4,0.6,0.8,1,1.2,1.4,1.6,1.8,2.0,2.2,2.4},
            ytick={0,20,40,60,80,100},
             cycle list name=whatever,
             xlabel=Bias amount (\%)
        ]
        \addplot table [x=bias,y=missing_labels, col sep=comma]{data/drugs_d1_all.csv};

        \nextgroupplot[
            ymin=0,
            ymax=100,
            ytick={0,20,40,60,80,100},
             cycle list name=whatever,
             xlabel=Bias amount (\%)
        ]
        \addplot table [x=bias_pct,y=missing_labels, col sep=comma]{data/compas_d1_all.csv};
        
        \nextgroupplot[
            ymin=0,
            ymax=100,
            ytick={0,20,40,60,80,100},
             cycle list name=whatever,
             xmax=0.8,
            xlabel=Bias amount (\%)
        ]
        \addplot table [x=bias,y=missing_labels, col sep=comma]{data/adult_d2_all.csv};
        
        \nextgroupplot[
            ymin=0,
            ymax=100,
            xmax=2.2,
            xtick={0,.4,0.8,1.2,1.6,2},
            minor xtick={0,0.2,0.4,0.6,0.8,1,1.2,1.4,1.6,1.8,2.0},
             cycle list name=whatever,
            xlabel=Bias amount (\%)
        ]
        \addplot table [x=bias,y=missing_labels, col sep=comma]{data/mnist_all.csv};
        
    \end{groupplot}

\end{tikzpicture}
\caption{Certifiable robustness (shown on y-axis as a percentage of test data) for various datasets under different bias models.}\label{fig:all_data}
\end{figure}
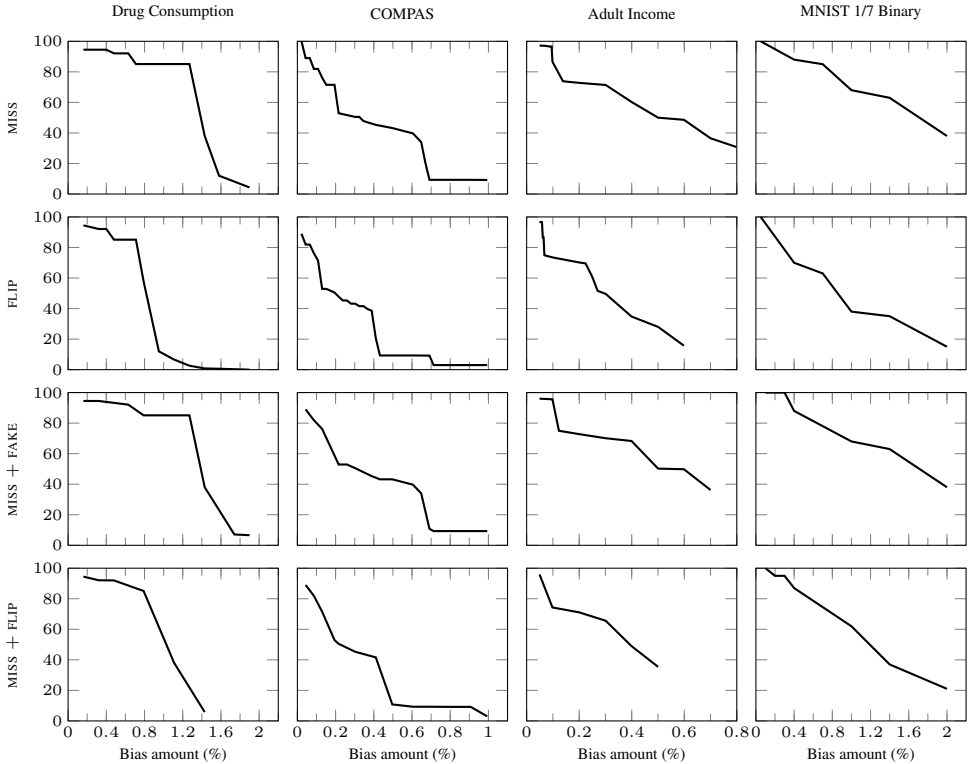

\paragraph{Demographic data}

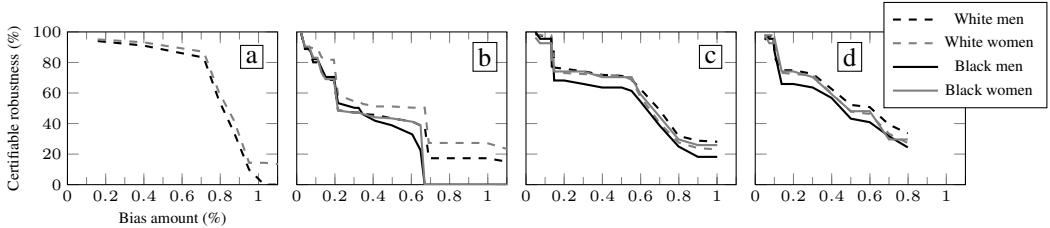
\begin{figure}[t]
\centering
\tiny
\pgfplotsset{filter discard warning=false}

\pgfplotscreateplotcyclelist{whatever}{%
    black,thick,dashed,every mark/.append style={fill=blue!80!black},mark=none\\%
    gray,thick,dashed,every mark/.append style={fill=red!80!black},mark=none\\%
    black,thick,every mark/.append style={fill=blue!80!black},mark=none\\%
    gray,thick,every mark/.append style={fill=red!80!black},mark=none\\%
    }
    
\begin{tikzpicture}
    \begin{groupplot}[
            group style={
                group size=4 by 1,
                horizontal sep=.1in,
                vertical sep=.05in,
                ylabels at=edge left,
                yticklabels at=edge left,
                xlabels at=edge bottom,
                xticklabels at=edge bottom,
            },
            height=.8in,
            xlabel near ticks,
            ylabel near ticks,
            scale only axis,
            width=0.2*\textwidth,
            xtick={0,.2,.4,.6,.8,1,1.2,1.4},
            minor xtick={0,0.1,0.2,0.3,0.4,0.5,0.6,0.7,0.8,0.9,1,1.1,1.2,1.3},
            xmin=0,
            xmax=1.1
        ]

        \nextgroupplot[
            ylabel=Certifiable robustness (\%),
            ymin=0,
            ymax=100,
            cycle list name=whatever,
            xlabel=Bias amount (\%)]
        \addplot table [x=bias_pct,y=male, col sep=comma]{data/drugs_demo.csv};
        \addplot table [x=bias_pct,y=female, col sep=comma]{data/drugs_demo.csv};

        \nextgroupplot[
            ymin=0,
            ymax=100,
            ytick={0,20,60,40,80},
             cycle list name=whatever,
        ]
        \addplot table [x=bias, y=whitemen, col sep=comma]{data/compas_missing_demo.csv};
        \addplot table [x=bias, y=whitewomen, col sep=comma]{data/compas_missing_demo.csv};
        \addplot table [x=bias, y=blackmen, col sep=comma]{data/compas_missing_demo.csv};
        \addplot table [x=bias, y=blackwomen, col sep=comma]{data/compas_missing_demo.csv};

        \nextgroupplot[
            ymin=0,
            ymax=100,
            xtick={0,.2,.4,.6,.8,1,1.2,1.4,1.6,1.8},
            minor xtick={0,0.1,0.2,0.3,0.4,0.5,0.6,0.7,0.8,0.9,1,1.1,1.2,1.3,1.4,1.5,1.6,1.7,1,8},
             cycle list name=whatever
        ]
        \addplot table [x=bias, y=whitemen, col sep=comma]{data/adult_labels_targeted_demo.csv};
        \addplot table [x=bias, y=whitewomen, col sep=comma]{data/adult_labels_targeted_demo.csv};
        \addplot table [x=bias, y=blackmen, col sep=comma]{data/adult_labels_targeted_demo.csv};
        \addplot table [x=bias, y=blackwomen, col sep=comma]{data/adult_labels_targeted_demo.csv};

        \nextgroupplot[
            ymin=0,
            ymax=100,
            ytick={0,20,40,60,80},
            legend style={at={(1,0.5)},anchor=south},
             cycle list name=whatever,
        ]
        \addplot table [x=bias, y=whitemen, col sep=comma]{data/adult_missing_demo.csv};
        \addplot table [x=bias, y=whitewomen, col sep=comma]{data/adult_missing_demo.csv};
        \addplot table [x=bias, y=blackmen, col sep=comma]{data/adult_missing_demo.csv};
        \addplot table [x=bias, y=blackwomen, col sep=comma]{data/adult_missing_demo.csv};

\legend{White men,White women,Black men,Black women};

    
    \end{groupplot}
    \node[draw] at (2.45,1.7) {\normalsize a};
    \node[draw] at (5.55,1.7) {\normalsize b};
    \node[draw] at (8.55,1.7) {\normalsize c};
    \node[draw] at (10.4,1.7) {\normalsize d};

\end{tikzpicture}
\caption{Left to right: Certifiable robustness by demographic group on (a) 
Drug Consumption  under $\flipB{}{}$ (NOTE: on this graph, the dark dotted line is all men, and the light dotted line is all women); (b) COMPAS under $\missB{}{}$; (c) Adult Income under $\flipB{}{\pred}$ where $\pred \triangleq (\emph{gender}=\textrm{Female} \land \emph{label}=\textrm{negative})$; (d) Adult Income under $\missB{}{}$ }\label{fig:extra_demo}
\end{figure}

\cref{fig:extra_demo} shows robustness levels stratified by demographic groups on various bias models. 
We see that COMPAS under $\missB{}{}$ (\cref{fig:extra_demo}b) displays similar robustness gaps to $\flipB{}{}$; namely, White people, and particularly White women, are robust at a higher rate than Black people. 
Adult Income under $\missB{}{}$ (\cref{fig:extra_demo}c) and under $\flipB{}{\pred}$ where $\pred\triangleq(\emph{gender}=\textrm{female}\land\emph{label}=\textrm{negative}$) (\cref{fig:extra_demo}d) behaves similarly to Adult Income under $\flipB{}{}$ (\cref{fig:demo}). 
That is, all demographic groups have roughly comparable robustness rates. 
Drug consumption under $\flipB{}{}$ (\cref{fig:extra_demo}a) yields comparable robustness rates between men and women (we do not graph robustness rates by racial group because the dataset is over 91\% White). 

\subsection{Additional details on random testing}\label{app:fuzzing}
On a random subset of 100 test elements from the COMPAS dataset, we tested \num{10000} dataset perturbations under $\flipB{0.5\%}{}$, $\flipB{1\%}{}$, $\flipB{2\%}{}$, and $\flipB{3\%}{}$. The number of elements for which we found a counter-example to robustness (i.e., a dataset perturbation that resulted in a different classification) is shown in \cref{tab:fuzzing}. We see that we are able to find counterexamples to robustness for a non-trivial portion of test samples. However, the gap between certified-robustness and proved-non-robust rates is still wide (the gap ranges from 86.7\% for $\flipB{0.5\%}{}$ to 68.0\% for $\flipB{3\%}{}$). As a result, there are many test samples that we cannot prove robustness for, but cannot find counterexamples for either. Future work to use a more precise abstract domain, or to better identify counterexamples to robustness could help to narrow this gap.

\begin{table}[t]
    \small
    \centering
    \caption{Number of elements with counterexamples to robustness after \num{10000} iterations of random testing from a subset of 100 test samples from COMPAS. All bias models are $\flip$, and bias level refers to number of affected elements as a percentage of training dataset size.}
    \label{tab:fuzzing}
    \begin{tabular}{cr} \toprule
        Bias level & \# of elements with counterexample  \\\midrule
        0.5 & 4 \\
        1.0 & 15 \\
        2.0 & 27 \\
        3.0 & 32 \\\bottomrule
    \end{tabular}
\end{table}

Breaking down the results for $\flipB{3\%}{}$ further, we found counterexamples to robustness for 50\% of Black women, 37\% of Black men, 29\% of White women, and 27\% of White men. Similarly, we found more counter-examples to robustness for Black people using other bias models. The empirical result of having more counterexamples for test instances representing Black people combined with the fact that we are able to certify a smaller percentage of test instances representing Black people (\cref{sec:demo}) suggests that the robustness differences are inherent to the data, rather than a property of the abstraction.

\end{document}